\documentclass{article} 
\usepackage{iclr2026_conference,times}
\iclrfinalcopy

\usepackage[utf8]{inputenc} 
\usepackage[T1]{fontenc}    
\usepackage{url}            
\usepackage{booktabs}       
\usepackage{amsfonts}       
\usepackage{nicefrac}       
\usepackage{microtype}      
\usepackage{xcolor}         

\usepackage[scaled]{helvet} 

\usepackage{multirow,threeparttable}
\usepackage{mathtools,amsthm}
\usepackage{algorithm,algpseudocode}
\algnewcommand{\Initialize}[1]{%
	\State \textbf{Initialize} \parbox[t]{.8\linewidth}{\raggedright #1}
}

\usepackage[hidelinks]{hyperref}
\hypersetup{
	colorlinks=true,
	linkcolor=blue!60!black,
	citecolor=blue!60!black,
	urlcolor=blue!60!black,
}

\usepackage{graphicx}
\graphicspath{{figures/}}

\newtheorem{ass}{Assumption}

\newtheorem{thm}{Theorem}
\newtheorem{prop}{Proposition}
\newtheorem{lem}{Lemma}

\newtheorem{cor}{Corollary}
\newtheorem{rem}{Remark}

\renewcommand{\vec}{\mathrm{vec}}

\title{Tuning the burn-in phase in training recurrent neural networks improves their performance}

\author{%
  Julian D. Schiller\thanks{Corresponding author (e-mail: schiller@irt.uni-hannover.de); code is publicly available in the GitHub repository at \url{https://github.com/jdschille/rnn-training}} \And Malte Heinrich \And Victor G. Lopez \And Matthias A. Müller\and\\[-2.2ex]
  Leibniz University Hannover, Institute of Automatic Control, Hannover, Germany\\
}

\begin{document}

\maketitle

\begin{abstract}
	Training recurrent neural networks (RNNs) with standard backpropagation through time (BPTT) can be challenging, especially in the presence of long input sequences. A practical alternative to reduce computational and memory overhead is to perform BPTT repeatedly over shorter segments of the training data set, corresponding to truncated BPTT. In this paper, we examine the training of RNNs when using such a truncated learning approach for time series tasks. Specifically, we establish theoretical bounds on the accuracy and performance loss when optimizing over subsequences instead of the full data sequence. This reveals that the burn-in phase of the RNN is an important tuning knob in its training, with significant impact on the performance guarantees. We validate our theoretical results through experiments on standard benchmarks from the fields of system identification and time series forecasting. In all experiments, we observe a strong influence of the burn-in phase on the training process, and proper tuning can lead to a reduction of the prediction error on the training and test data of more than 60\% in some cases.
\end{abstract}

\section{Introduction}\label{sec:intro}
Recurrent neural networks (RNNs) are inherently suited for processing sequential data  \citep{Hewamalage2021,Lipton2015}. For this reason, they have a wide range of applications, including natural language modeling \citep{Khan2023,Mikolov2011,Mikolov2013}, text and image generation \citep{Sutskever2011,Gregor2015}, and computer vision \citep{Buhrmester2021}, as well as classical time series tasks such as modeling dynamical systems \citep{Pillonetto2025,Beintema2023a,Forgione2021,Forgione2021a,Schussler2019,Wang2006} and time series forecasting \citep{Hewamalage2021,Elsworth2020,Bianchi2017}.
While the Elman network \citep{Elman1990} represents one of the most classic RNN architectures, prevailing approaches in practice combine the vanilla RNN with additional gating mechanisms, such as Long Short-Term Memory (LSTM) \citep{Hochreiter1997} and Gated Recurrent Units (GRU) \citep {Cho2014}.
RNNs have recently become a very active field of research again, driven also by the impressive performance that transformers have recently achieved for long-range reasoning \citep{Kim2025}.
Modern architectures include deep state-space models (SSMs) such as S4 \citep{Gu2022} and Mamba \citep{Gu2024,Wang2025}, as well as linear recurrent units (LRUs) \citep{Orvieto2023,Zucchet2023}. Other recent works involve Koopman theory \citep{Li2025}, sparse deep learning \citep{Zhang2023}, and probabilistic methods \citep{Coscia2025}.

RNNs are traditionally trained using backpropagation through time (BPTT) \citep{Werbos1990}; namely, in an iterative fashion using steepest descent. This requires to perform, at each iteration, a forward pass to compute the loss and a backward pass to compute the gradients, each running over the whole training sequence.
However, in case the training sequence is large, many numerical problems can arise: the loss function landscape becomes increasingly complex \citep{AllenZhu2019,Ribeiro2020}, the computations of the forward and backward pass become computationally intense, and they may even blow up due to vanishing or exploding gradients \citep{Bengio1994,Pascanu2013,Zucchet2024}.
A standard approach to ensure numerical stability, efficiency, and faster training due to less computational and memory overhead is to simply truncate the forward and backward passes, realizing truncated backpropagation through time (TBPTT) \citep{Williams1990}.
This can be leveraged in a modified, stochastic version of the parameter update law (i.e., using stochastic gradient descent, SGD), where stochasticity comes from the fact that the loss gradients are computed over (a batch of) random segments of the training data set.
Overall, TBPTT can be regarded as a practical compromise between learning effectiveness and computational effort. It is a standard method for RNN training, particularly in the presence of long training sequences \citep{Mikolov2011,Tang2018,Aicher2020,Xu2023a}.

Computing the forward pass in TBPTT requires initializing the hidden state, which is usually set to zero or chosen at random.
This may hinder the learning of long-term dependencies. Alternatives include propagating the hidden state across iterations \citep{Xu2023a,Elsworth2020,Katrompas2021} or estimating it using additional trainable autoencoders \citep{Mohajerin2017,Masti2021,Forgione2022,Beintema2021,Beintema2023a}.
While beneficial when long-term dependencies are crucial, zero initialization prevails in applications because the implementation is much simpler and renders the overall training process more efficient, numerically stable, and robust with respect to local minima \citep{Xu2023a,AllenZhu2019}.

In this work, we consider the training of RNNs for time series tasks, relying on TBPTT with zero initialization to ensure a simple implementation and efficient training.
To cope with internal transients resulting from initialization, we introduce the burn-in phase of the network as an additional parameter of the training process, effectively excluding them from the loss function.
This was also previously suggested by \citet{Jaeger2002,Bonassi2022,Beintema2021}, referring to it (inconsistently) as ``wash out'' or ``burn time''; however, without analyzing its influence on the resulting performance or providing systematic tuning guidelines, leaving it a rather unnoticed~heuristic.

In this paper, we provide answers to the following questions.
\begingroup
\it
\begin{enumerate}
	\item How much do we lose in the training by employing TBPTT with zero initialization?
	\item How can we suitably modify the training process to reduce this loss?
\end{enumerate}
\endgroup
We establish novel theoretical results on the performance of RNN training when optimizing over subsequences of the training data using a truncated learning approach.
More specifically, we estimate the performance gap (\emph{regret}) with respect to a challenging benchmark RNN, trained by optimizing its hidden states on all of the training data.
{Regret is a standard measure for assessing algorithms in reinforcement learning~\citep{Jaksch2010,Agrawal2021}, online optimal control \citep{Agarwal2019,Li2019,Nonhoff2022,Martin2024a}, and state estimation \citep{Brouillon2023,Schiller2025}.}
{Our technical analysis is based on viewing the training problem through the lens of optimal control, focusing in particular on the \emph{turnpike phenomenon}.
This property generally implies that optimal trajectories most of the time stay close to an optimal time-varying reference, which is regarded as the \emph{turnpike}, see, e.g., \citet{Trelat2025,Faulwasser2022,Zaslavski2006,Carlson1991} for further details on this topic.
Turnpike-related arguments have generally proven useful for analyzing the performance of optimization-based control and estimation techniques \citep{Faulwasser2018,Gruene2019,Schiller2025}.}

The derived regret estimates depend on the hyperparameters of TBPTT, particularly the interplay between the internal forgetting rate of the network and the burn-in phase.
These novel insights can be leveraged to select a suitable burn-in phase during training, ensuring that the resulting performance is close to that of the benchmark RNN.
We validate our theoretical results in experiments using real world datasets from the fields of system identification and time series forecasting. In all experiments, we observe a strong influence of the burn-in phase on the training process, and proper tuning can lead to a reduction of the prediction error on the training \emph{and test} data of more than 60\% in some cases.
The main findings of this paper for RNN training using TBPTT can be summarized as follows:
\begin{itemize}
	\item \emph{Theory:} We develop the concept of the burn-in phase from a heuristic to a theoretically grounded method.
	\item \emph{Practice:} Tuning the burn-in phase can yield significant performance increase.
\end{itemize}

\section{Recurrent neural networks}
\label{sec:setup}

The core of any recurrent network is the internal dynamics of its hidden states. Given an input sequence $\{x_t\}_{t=1}^T$ of a certain length $T>0$, the evolution of the hidden state $h_t$ for some pre-specified initial state $h_0$ can generally be described by the equation
\begin{equation}\label{eq:RNN_1}
	h_{t} = f(h_{t-1},x_t;\theta_\mathrm{h}), \quad t = 1,...,T,
\end{equation}
where $f$ is a function parameterized by the network parameters $\theta_\mathrm{h}\in\mathbb{R}^{n_h}$.
The output $y_t$ of the network at time $t$ is produced through an output layer, which can be characterized in terms of a function $g$ such that
\begin{equation}\label{eq:RNN_2}
	y_t = g(h_t,x_t;\theta_\mathrm{y}), \quad t=1,...,T,
\end{equation}
where $\theta_\mathrm{y}\in\mathbb{R}^{n_y}$ corresponds to the parameters of the output layer.
For convenience, throughout the following we consider the combined parameter vector $\theta = \big[\theta_\mathrm{h}^\top, \theta_\mathrm{y}^\top\big]^\top\in\mathbb{R}^n$ with $n=n_h+n_y$, which consists of all the learnable parameters of the model in~\eqref{eq:RNN_1} and \eqref{eq:RNN_2}.
Here, we want to emphasize that we consider a general multivariate setup, where $x_t\in\mathbb{R}^{d_x}$, $h_t\in\mathbb{R}^{d_h}$, and $y_t\in\mathbb{R}^{d_y}$.

Overall, the output $y_t$ at time $t=1,...,T$ depends on the initial hidden state $h_0$, the input sequence $\{x_j\}_{j=1}^t$, and the model equations \eqref{eq:RNN_1}--\eqref{eq:RNN_2} for a particular network parameter $\theta$. To make this explicit, we sometimes express the output $y_t$ in terms of the mapping $y_t(h_0,\theta,\{x_j\}_{j=1}^t)$ or, for convenience, using the full input sequence $y_t(h_0,\theta,\{x_j\}_{j=1}^T)$ (which is equivalent, as~\eqref{eq:RNN_1} ensures causality).

\begin{figure}
	\centering
	\includegraphics[]{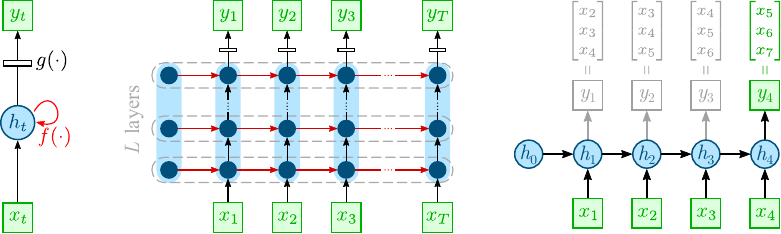}
	\caption{General structure of an RNN (left) with its internal architecture unfolded in space and time (middle).
	The network consists of $L$ recurrent layers and processes an input sequence of length $T$. Black arrows denote instantaneous spatial processing, red arrows denote temporal processing. Each arrow represents learnable linear or nonlinear projections and/or layers, potentially with additional skip connections, which can generally be captured by the functions $f$ and $g$. The right figure shows an exemplary RNN for time series forecasting applications, producing a forecast of $F{\,=\,}3$ future values of a sequence $\{x_t\}$ using a lookback window of $N{\,=\,}4$.}
	\label{fig:RNN}
\end{figure}

Note that the model in \eqref{eq:RNN_1}--\eqref{eq:RNN_2} constitutes a very general RNN description, which includes all commonly used network architectures, such as classical Elman RNNs~\citep{Elman1990}, LSTMs~\citep{Hochreiter1997}, GRUs~\citep{Cho2014}, and echo state networks~\citep{Jaeger2002}, but also recently proposed architectures such as LRU networks~\citep{Orvieto2023}.
In deep RNN structures, the hidden state $h_t$ corresponds to the stacked hidden states of all individual layers, and the function~$f$ embodies the dynamics of the individual layers together with possibly additional nonlinear projections, gated linear units, and/or normalization operators between the layers, see also Figure~\ref{fig:RNN}.
For the most simple case of a one-layer Elman RNN, the functions $f$ and $g$ and the parameter vector~$\theta$ specialize to
\begin{align}\label{eq:elman}
	\begin{split}
		f(h,x;\theta) &= \phi(W_{\mathrm{\mathrm{hh}}}h+W_{\mathrm{\mathrm{xh}}}x + b_\mathrm{h}), \qquad g(h,x;\theta) = W_{\mathrm{\mathrm{hy}}}h_t + b_\mathrm{y},\\[-0.7ex]
		\theta &=
		\begin{bmatrix}
			\vec(W_{\mathrm{\mathrm{hh}}})^\top & \vec(W_{\mathrm{\mathrm{xh}}})^\top & b_\mathrm{h}^\top & \vec(W_{\mathrm{\mathrm{hy}}})^\top &  b_\mathrm{y}^\top
		\end{bmatrix}^\top.
	\end{split}
\end{align}
Here, the network dynamics \eqref{eq:RNN_1} involve the weight matrix $W_{\mathrm{hh}}$ for the recurrent connection, the weight matrix $W_{\mathrm{xh}}$ between the input and the hidden layer, the bias vector $b_\mathrm{h}$, and the activation function $\phi$ (e.g., $\tanh$ or ReLU); the output \eqref{eq:RNN_2} is formed by a linear layer with weight matrix $W_{\mathrm{hy}}$ and bias vector $b_\mathrm{y}$. All the weight matrices and bias vectors can be collected in the parameter vector~$\theta$.

Note that we define the network output in~\eqref{eq:RNN_2} for all $t=1,...,T$. While this is natural in the context of dynamical systems, in many standard RNN applications such as classification tasks and time series forecasting, the output is usually only formed after the entire input sequence has been processed, that is, at time $T$. However, $g$ can still be evaluated at earlier times $t=1,...,T-1$, see Figure~\ref{fig:RNN}. Finally, note that the RNN \eqref{eq:RNN_1}--\eqref{eq:RNN_2} covers any parameterized nonlinear state-space model; the subsequent analysis therefore directly also covers classical system identification problems \citep{Ljung1999}.

\section{Training RNNs using TBPTT and mini-batch SGD}

{In this section, we revisit the overall TBPTT learning algorithm to properly introduce our setup and notation. In particular, we illustrate the data segmentation process inherent to TBPTT (Section~\ref{sec:data_segmentation}), define the segment-wise loss function (Section~\ref{sec:loss_function}), show a basic {mini-}batch training scheme (Section~\ref{sec:algorithm}), and present a suitable measure of training performance (Section~\ref{sec:performance_metric}).
In this work, we consider the standard mean squared error (MSE) loss, where we treat the burn-in phase of the RNN as an additional hyperparameter of the training process (see Section~\ref{sec:loss_function}). In Section~\ref{sec:results}, we establish novel regret guarantees for truncated learning, which exhibit a strong dependence on the burn-in phase.}

\subsection{Data segmentation in TBPTT}\label{sec:data_segmentation}

{Let an input-output time series pair $D = \{({x}_{t}^\mathrm{d},{y}_{t}^\mathrm{d})\}_{t=1}^T$ of length $T$ be given and denote the input data sequence as $X^\mathrm{d} := \{x^\mathrm{d}_t\}_{t=1}^T$.}
{TBPTT essentially implements the BPTT algorithm on short segments of the training data set.
To this end, we divide the training sequence $D$ into $S$ subsequences, each of which has length $N<T$.}
We define the index sets for the subsequences as $\mathcal{S} := \{1,...,S\}$ and for the elements within a subsequence as $\mathcal{N}:=\{1,...,N\}$.
Moreover, we denote the input-output data points of subsequence $i\in\mathcal{S}$ that starts at data sample $s_i\in[1,T-N+1]$ by ${x}_{j|i}^\mathrm{d} := x_{s_i+j-1}^\mathrm{d}$ and ${y}_{j|i}^\mathrm{d} := y_{s_i+j-1}^\mathrm{d}$ for all $j\in\mathcal{N}$.
For convenience, these are grouped as follows:
\begin{equation}\label{eq:Di}
	D_i := \{({x}_{j|i}^\mathrm{d},{y}_{j|i}^\mathrm{d})\}_{j=1}^N, \quad X_i^\mathrm{d} := \{x^\mathrm{d}_{j|i}\}_{j=1}^N, \quad i \in \mathcal{S}.
\end{equation}
In order to make use of all available data during training, in the following we restrict ourselves to the case where the subsequences $D_i$ cover the complete data set $D$, and furthermore, assume that the start samples $s_i$, $i\in\mathcal{S}$ are arranged in ascending order.
Hence, a subsequence $D_i$ may share some data points with the preceding subsequence $D_{i-1}$, which can be characterized by the overlap $o_i$ as
\begin{equation}\label{eq:overlap_i}
	o_i := N-(s_i-s_{i-1}), \quad i=2,...,S.
\end{equation}
Here, we note that $o_i$ takes values in the interval $[0,N-1]$.
Considering all $S$ subsequences, we define the minimum overlap according to
\begin{equation}\label{eq:overlap_min}
	o_{\min} := \min_{2\leq i \leq S} o_i = N-\max_{2\leq i \leq S}(s_i-s_{i-1}).
\end{equation}
The overall data segmentation process is also illustrated in Figure~\ref{fig:data}. The variables $S$, $N$, and $s_i$, $i\in\mathcal{S}$, represent important hyperparameters of the TBPTT algorithm that have a strong influence on the training process and the resulting performance of the trained RNN. In general, suitable values may be chosen by leveraging additional information about the given data set and the underlying application (such as specific patterns or seasonal components of different length).
If no such information is available, for a fixed value of $N$, a practical choice is to select $S=T-N+1$ (i.e., use as many subsequences of length $N$ as possible), which implies that $s_i = i$ for all $i\in\mathcal{S}$ and yields the maximum overlap $o_i = N-1 = o_{\min}$ uniformly for all $2\leq i \leq S$.

\begin{figure}
	\centering
	\includegraphics[]{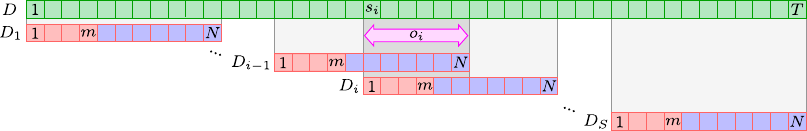}
	\caption{Data segmentation in TPBTT learning. Depicted are the full data sequence $D$ of length $T$ (green) and four (out of $S$) subsequences of length $N$ (red). The subsequences $D_i$ and $D_{i-1}$ start at samples $s_i$ and $s_{i-1}$, respectively, and result in the overlap $o_i$, compare~\eqref{eq:overlap_i}. Blue highlighted~segments appear in the loss function $L(\theta;D_i)$ in~\eqref{eq:loss}, which is determined by the burn-in phase length $m$.}
	\label{fig:data}
\end{figure}

\subsection{Segment-wise loss function with burn-in phase}\label{sec:loss_function}

In this paper, we consider the MSE loss function---a standard choice for time series tasks. Extensions to other loss functions suitable for, e.g., classification problems are an interesting topic of future~work.

For a particular parameter $\theta$, we define the loss function involving the data subsequence~$D_i$ as
\begin{equation}\label{eq:loss}
	L(\theta;D_i) = \frac{1}{N-m}\sum_{j=m+1}^{N}\|{y}_j(0,\theta,X_i^\mathrm{d})-{y}_{j|i}^\mathrm{d}\|^2, \quad i \in\mathcal{S}.
\end{equation}
The variable~$m$ appearing in the lower bound of summation in \eqref{eq:loss} refers to the \emph{burn-in phase} of the network, see Figure~\ref{fig:data}. It is an additional design variable to compensate for network outputs that might be negatively affected by internal transients resulting from the zero initialization. It can be chosen from the interval $[0,N-1]$, where $m=0$ recovers the standard MSE loss (without burn-in phase).
Similar approaches have been used previously \citep{Jaeger2002,Bonassi2022,Beintema2021}, albeit without performing any detailed analysis or providing tuning guidelines.
Note that $m=N-1$ captures scenarios in which the output is generated by default only after processing the entire input subsequence $X_i$ of length $N$, as is the case with time series forecasting, compare Figure~\ref{fig:RNN} (right). Nevertheless, in the following sections we show both theoretically and experimentally that even here, a choice of $0<m<N-1$ can be beneficial for overall performance, suggesting that $m$ should generally be considered a hyperparameter of RNN training.

\subsection{Basic training algorithm}\label{sec:algorithm}

The network parameter $\theta$ are usually trained iteratively with SGD using mini-batches \citep{Lipton2015,Hewamalage2021,Tian2023}. More specifically, at each iteration $k$, we choose a random subset $\mathcal{B}_k\subset\mathcal{S}$ of size $|\mathcal{B}_k| =: b< S$ to update $\theta$ according to
\begin{equation}
	\theta_{k+1} = \theta_{k} - \eta_k\Delta\theta_k,
	\qquad 
	\Delta\theta_k := \frac{1}{b}\sum_{i\in\mathcal{B}_k}\nabla_\theta L(\theta_{k};D_i).\label{eq:SGD2}
\end{equation}
Here, $\Delta\theta_k$ represents the batch-averaged gradient. The learning rate $\eta_k>0$ is usually chosen to be a small constant or adapted with time \citep{Tian2023}.
The learning procedure is summarized in Algorithm~\ref{alg:SGD}.
Here, Step 4 actually corresponds to an additional loop that performs, for each subsequence $i$ contained in the current mini-batch $\mathcal{B}_k$, a forward pass to compute the segment-wise loss $L(\theta_k;D_i)$, and a backward pass to compute the corresponding gradient. However, these computations are completely independent for each $i\in\mathcal{B}_k$ and can therefore be performed in parallel (possibly on a GPU), which significantly accelerates the training.
The basic algorithm can be adapted, e.g., by employing SGD variants such as AdaGrad, RMSProp, Adam, etc. \citep{Tian2023,Bottou2007,Kingma2015}, all of which are part of standard deep learning libraries such as PyTorch.
Recent results on the global convergence and stability of SGD algorithms in the context of machine learning can be found in the work of \citet{Patel2022,AllenZhu2019}.

\begin{algorithm}
	\caption{RNN training using TBPTT with mini-batch SGD}
	\label{alg:SGD}
	\begin{algorithmic}[1]
		\Require data set $D$, learning rate $\eta_k>0$
		\Initialize{$\theta_0\in\mathbb{R}^n$, $k=0$.}
		\While{algorithm not converged}
		\State Sample $\mathcal{B}_k\subset \mathcal{S}$ \Comment{Select current mini-batch}
		\State $\Delta\theta_k = \frac{1}{b}\sum_{i\in\mathcal{B}_k}\nabla_\theta L(\theta_{k};D_i)$ \Comment Compute batch-averaged gradient
		\State $\theta_{k+1} = \theta_{k} - \eta_k\Delta\theta_k$  \Comment{Update parameter}
		\State $k=k+1$
		\EndWhile
	\end{algorithmic}
\end{algorithm}

\begin{rem}\label{rem:stateful}
	Algorithm~\ref{alg:SGD} employs zero initialization of each segment-wise forward pass, which is the most classical form of TBPTT.
	A slight modification is to preserve the chronological order of the subsequences when selecting the current batch $\mathcal{B}_k$ in Step 3 and pass the hidden states from iteration $k$ to $k+1$, which is sometimes referred to as ``stateful'' training \citep{Katrompas2021,Elsworth2020,Xu2023a}. Although this can be advantageous when long-term dependencies are crucial, the former approach prevails in deep learning applications because the randomness of the training algorithm can help escape from local minima and generally renders the training process more numerically stable, robust, and efficient \citep{Xu2023a,AllenZhu2019}. Still, extending our theory to stateful training could be an interesting topic for future work.
\end{rem}

\subsection{Training performance}\label{sec:performance_metric}

Recall that TBPTT is a practical approximation of BPTT using the full training data set $D$. Hence, it seems meaningful to measure the performance of the training algorithm by re-evaluating the network after successful training, processing the full training input sequence $X^\mathrm{d}$ and computing the MSE between the generated outputs and the ground truth data $Y^\mathrm{d}$.
Specifically, for a given $\theta$ and initial hidden state $h_0$, we define
\begin{equation}\label{eq:performance}
	P(h_0,\theta;D) := \frac{1}{T-m}\sum_{t=m+1}^{T}\|y_t(h_0,\theta,X^\mathrm{d})-y^\mathrm{d}_t\|^2.
\end{equation}
To ensure a fair comparison between parameter sets when using different network initializations, we in fact consider the truncated MSE in~\eqref{eq:performance}, where, similar to the segment-wise loss in~\eqref{eq:loss}, we exclude internal network transients by incorporating the burn-in phase $m$.

\section{Regret guarantees for truncated learning}\label{sec:results}

This section contains our theoretical results, providing answers to the research questions stated in Section~\ref{sec:intro}.
Our results indicate how the design parameters $S$ (number of the subsequences), $N$ (length of the subsequences), and burn-in phase length~$m$ of the network influence the training performance of RNNs when pursuing a truncated learning approach.
Here, we assess the training in terms of the achieved performance gap (i.e., the \emph{regret}) with respect to a challenging benchmark.

\subsection{An optimization perspective}\label{sec:optimization}

The averaged gradient $\Delta\theta_k$ in~\eqref{eq:SGD2} can be interpreted as an unbiased estimate of the gradient of the full-batch loss function, that is, the right-hand side of~\eqref{eq:SGD2} with $\mathcal{B}$ and $b$ replaced by $\mathcal{S}$ and $S$, respectively, compare \citet{Tian2023}. Hence, TBPTT as implemented by Algorithm~\ref{alg:SGD} actually aims at solving the following optimization problem:
\begin{equation}\label{eq:NLP_SGD}
	V^*
	= \min\limits_{\theta}\ \frac{1}{S}\sum_{i=1}^{S} L(\theta,D_i)
	= \min\limits_{\theta}\ \frac{1}{S(N-m)}\sum_{i=1}^{S}\sum_{j=m+1}^{N} \|{y}_j(0,\theta,X_i^\mathrm{d})-y_{j|i}^\mathrm{d}\|^2.
\end{equation}
The optimization objective in~\eqref{eq:NLP_SGD} involves $S$ hidden state sequences of length $N+1$, each initialized with zero, coupled by the same parameterization $\theta$.
This is conceptually different from the definition of the performance metric in~\eqref{eq:performance}, which assesses the training performance based on a \emph{single hidden state sequence of length $T+1$}.
This naturally raises the question: What if we leverage this additional knowledge during optimization?
This motivates considering the following benchmark problem:
\begin{subequations}\label{eq:NLP_benchmark}
	\begin{align}
			V^\mathrm{b} = &\ \min\limits_{\{h_t\}_{t=0}^T,\, \theta}\ \frac{1}{S(N-m)}\sum_{i=1}^{S}\sum_{j=m+1}^{N} \|{y}_j(h_{s_i},\theta,X_i^\mathrm{d})-y_{j|i}^\mathrm{d}\|^2 \label{eq:NLP_benchmark_obj}\\
		&\ \ \text{subject to} \quad  h_{t} = f(h_{t-1},x^\mathrm{d}_{t};\theta), \quad t=1,...,T. \label{eq:NLP_benchmark_con}
	\end{align}
\end{subequations}
Note that the benchmark problem in~\eqref{eq:NLP_benchmark_con} uses the original TBPTT optimization objective from~\eqref{eq:NLP_SGD}, i.e., aims at minimizing the averaged MSE of $S$ output sequences of length $N$. However, the additional constraint in~\eqref{eq:NLP_benchmark_con} ensures that the underlying hidden state sequences actually correspond to different subsequences of a \emph{single} long hidden state sequence $\{h_t\}_{t=0}^T$.
Comparing the solutions of~\eqref{eq:NLP_SGD} and \eqref{eq:NLP_benchmark} therefore allows us to investigate and quantify the effects of using zero initialization in TBPTT instead of the optimal initializations.
Since both problems share the same optimization objective, we can perform this comparison in terms of: 1) the \emph{training regret}, quantifying the optimality gap $V^*-V^\mathrm{b}$ (see Theorem~\ref{cor:TP_informal}); and 2) the \emph{performance regret}, where we apply the parameterized networks to the full training input sequence and evaluate the performance metric~\eqref{eq:performance} (see Theorem~\ref{thm:P_informal}).

In the following, we refer to the solution of the TBPTT problem~\eqref{eq:NLP_SGD} with the parameter $\theta^*$, and to the benchmark problem in \eqref{eq:NLP_benchmark} with the hidden state sequence $\{h^\mathrm{b}_t\}^T_{t=0}$ and network parameter $\theta^\mathrm{b}$.

\subsection{Main results}\label{sec:results_main}
In this section, we present novel accuracy and regret guarantees for truncated learning. To provide insights into the training process and its tuning knobs that are also valuable to practitioners, we focus on the key messages and defer most technical details to the appendix; moreover, we formulate our theorems rather informally in the main body of the paper, while they are made precise in Appendix~\ref{appendix}.

In line with standard learning techniques, we suppose that the training data is normalized in the sense that $({x}_{t}^\mathrm{d},{y}_{t}^\mathrm{d})\in\mathcal{X}^\mathrm{d}\times\mathcal{Y}^\mathrm{d}$ for all $t=1,...,T$, where the sets $\mathcal{X}^\mathrm{d}\subset\mathbb{R}^{d_x}$ and $\mathcal{Y}^\mathrm{d}\subset\mathbb{R}^{d_y}$ are compact. Usual design choices include the intervals $[0,1]$ or $[-1,1]$ for the elements of each data point.
We focus on parameterizations that result in the RNN having the following output stability property.

\begin{ass}\label{ass:output_stability}
	There exists a (non-empty) set $\Theta\subset\mathbb{R}^{n}$ such that, for all $\theta\in\Theta$, the network~\eqref{eq:RNN_1}--\eqref{eq:RNN_2} is exponentially incrementally output stable along input sequences of the training data, that is, there exist constants $C>0$ and $\lambda\in(0,1)$ such that
	\begin{equation}
		\|y_t(h_0^{(1)},{\theta},X)-y_t(h_0^{(2)},{\theta},X)\| \leq 	C\lambda^t\|h_0^{(1)}-h_0^{(2)}\|,\ t=1,2,...,T' \label{eq:output_stability}
	\end{equation}
	for any two initial hidden states $h^{(1)}_{0},h^{(2)}_{0}\in\mathbb{R}^n$ and any sequence $X$ of length $T'\in[0,T]$ extracted from the training input sequence $X^\mathrm{d}$.
\end{ass}

Assumption~\ref{ass:output_stability} essentially requires that the RNN ``forgets'' its initialization over time, so that for the same input sequence $X$, it always produces output sequences that converge to each other, despite different initial hidden states.
This is a standard condition \citep{Ljung1978}, and moreover, {generally} represents a desirable feature in the context of RNNs.
Indeed, when applying an RNN in practice, it should always generate meaningful outputs that are not overly sensitive to its initialization.
We want to emphasize that Assumption~\ref{ass:output_stability} does \emph{not} refer to a property of some underlying (unknown) data-generating system---it is rather a property of the trained RNN model.
Consequently, one could directly enforce it during training by normalizing and clipping certain RNN weights when performing the gradient updates such that the RNN forward dynamics essentially become stable (i.e., contractive), see \citet{Miller2019} for more details and corresponding methods (applicable to, e.g., LSTMs) and compare also \citet{Bonassi2021,Kolter2019,Kozachkov2022}.
Nevertheless, invoking these sufficient conditions is generally conservative, potentially limiting expressiveness of the RNN. However, note also that contractive dynamics are not necessary for Assumption~\ref{ass:output_stability}---rather, it suffices that the trained model satisfies the property in~\eqref{eq:output_stability} only along input sequences of the training data $X^\mathrm{d}$, which can be easily tested numerically after training, compare Section~\ref{sec:exp}.

Our first result bounds the gap between solutions to the TBPTT problem~\eqref{eq:NLP_SGD} and the benchmark~\eqref{eq:NLP_benchmark}.

\begin{thm}[informal]\label{cor:TP_informal}
	Let the TBPTT solution $\theta^*$ of the problem in~\eqref{eq:NLP_SGD} and the benchmark solution $\theta^\mathrm{b}$ of the problem in~\eqref{eq:NLP_benchmark} be given and assume that both parameters render the respective RNN incrementally output stable in the sense of Assumption~\ref{ass:output_stability}.
	Under an additional technical condition, there exist constants $C_1,C_2>0$ such that the following properties hold:
	\begin{align*}
		\text{1.} & \quad \text{Output accuracy:}
		&& \sum_{j=m+1}^{N} \ \frac{1}{S}\sum_{i=1}^{S}\|y_j(0,\theta^*,X^\mathrm{d}_i)-y_j(h_{s_i}^\mathrm{b},\theta^\mathrm{b},X^\mathrm{d}_i)\|^2
		\leq C_1\lambda^m\\
		\text{2.} & \quad \text{Training regret:}
		&& \ \ V^* - V^\mathrm{b} \leq C_2 \cdot \frac{\lambda^m}{N-m}
	\end{align*}
	Here, $\lambda\in(0,1)$ is from Assumption~\ref{ass:output_stability}.
\end{thm}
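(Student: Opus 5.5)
We bound the training regret first and then use a turnpike-type argument for output accuracy. For the regret, write $J(\theta,\{h_{s_i}\})$ for the common objective in~\eqref{eq:NLP_SGD}--\eqref{eq:NLP_benchmark_obj}. The benchmark solution $(\{h^\mathrm{b}_t\},\theta^\mathrm{b})$ is a natural comparison point. The pair $(0,\theta^\mathrm{b})$ is feasible for the TBPTT problem, so
\[
V^*\le \frac{1}{S(N-m)}\sum_{i=1}^S\sum_{j=m+1}^N\|y_j(0,\theta^\mathrm{b},X_i^\mathrm{d})-y^\mathrm{d}_{j|i}\|^2 .
\]
Let $a_j=y_j(0,\theta^\mathrm{b},X_i^\mathrm{d})-y_j(h^\mathrm{b}_{s_i},\theta^\mathrm{b},X_i^\mathrm{d})$ and $e_j=y_j(h^\mathrm{b}_{s_i},\theta^\mathrm{b},X_i^\mathrm{d})-y^\mathrm{d}_{j|i}$. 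Expanding $\|a_j+e_j\|^2$ gives
\[
V^*-V^\mathrm{b}\le \frac{1}{S(N-m)}\sum_{i,j}\bigl(\|a_j\|^2+2\|a_j\|\|e_j\|\bigr).
\]
By Assumption~\ref{ass:output_stability} applied to $\theta^\mathrm{b}$, $\|a_j\|\le C\lambda^j\|h^\mathrm{b}_{s_i}\|$.

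Two ingredients are needed here. First, the benchmark hidden states must be uniformly bounded, $\|h^\mathrm{b}_t\|\le \bar h$. This is where the ``additional technical condition'' enters: I would assume it, or derive it from compactness of $\mathcal{X}^\mathrm{d}$ together with boundedness of $\Theta$ and of an invariant set of $f$. Second, the errors $e_j$ must be bounded, which follows from compactness of $\mathcal{Y}^\mathrm{d}$ and boundedness of outputs on bounded states. With these, the geometric sum gives $\sum_{j>m}\lambda^j\le \lambda^{m+1}/(1-\lambda)$, and hence
\[
V^*-V^\mathrm{b}\le C_2\,\frac{\lambda^m}{N-m}.
\]
Note that the cross term is only linear in $\lambda^j$, but it is still geometric, so the rate $\lambda^m$ survives.

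For output accuracy I would use strict dissipativity or local strong convexity around the benchmark optimum; this is the technical condition and the main obstacle. Concretely, I would assume a quadratic growth bound for $J$: there exists $\alpha>0$ such that
\[
J(\theta,\{\tilde h_{s_i}\})-V^\mathrm{b}\ge \frac{\alpha}{S(N-m)}\sum_{i,j}\|y_j(\tilde h_{s_i},\theta,X_i^\mathrm{d})-y_j(h^\mathrm{b}_{s_i},\theta^\mathrm{b},X_i^\mathrm{d})\|^2
\]
for all relevant $(\theta,\tilde h)$. This is a turnpike-type condition stating that near-optimal output trajectories stay close to the benchmark outputs.

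The difficulty is that $(0,\theta^*)$ is not feasible for the benchmark, because its hidden states do not come from a single trajectory. To handle this, I would instead compare against the relaxed benchmark, the minimum of the same objective over $\theta$ and arbitrary initial states $\{\tilde h_{s_i}\}$ (no coupling). I would take the growth condition with respect to that relaxed optimum. Alternatively, I would assume the coupling is inactive at the optimum, so the relaxed and coupled optima coincide.

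Under this condition, plugging in $(0,\theta^*)$ gives
\[
\alpha\cdot\frac{1}{S(N-m)}\sum_{i,j}\|\cdot\|^2\le V^*-V^\mathrm{b}\le C_2\frac{\lambda^m}{N-m}.
\]
Multiplying by $N-m$ yields part~1 with $C_1=C_2/\alpha$. Incremental output stability of $\theta^*$ is used to ensure that trajectories from the zero initialization stay in the bounded region where the growth bound holds.

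I expect the main difficulty to lie in justifying this growth condition. The paper presumably states it as the technical condition, in terms of a turnpike property for the RNN-as-control-system problem. My first attempt would be to phrase it as strict dissipativity of the training problem with respect to the benchmark trajectory, from which the quadratic bound follows by a standard telescoping storage-function argument.
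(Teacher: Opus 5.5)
Your regret bound (part 2) is correct and is essentially the paper's argument: the candidate $(0,\theta^\mathrm{b})$ is TBPTT-feasible, and Assumption~\ref{ass:output_stability} for $\theta^\mathrm{b}$ plus bounded states gives a geometric sum. Boundedness is not the paper's ``technical condition''; the paper simply imposes it through the compact sets in \eqref{eq:NLP_HY}.

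The gap is in part 1. Suppose you pose the growth condition relative to the relaxed, uncoupled optimum, with value $V^\infty$ and outputs $y^\infty_{j|i}$ (the paper's ``unconstrained case''). Then plugging in $(0,\theta^*)$ gives $\alpha\,\frac{1}{S(N-m)}\sum_{i,j}\|y^*_{j|i}-y^\infty_{j|i}\|^2\le V^*-V^\infty$, not $\le V^*-V^\mathrm{b}$. Since $V^\infty\le V^\mathrm{b}$, the difference $V^*-V^\infty=(V^*-V^\mathrm{b})+(V^\mathrm{b}-V^\infty)$ is not controlled by your regret bound. Moreover, the left-hand side measures distance to $Y^\infty$, not to the benchmark outputs. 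Your displayed chain is valid only under your alternative assumption that the coupling is inactive ($V^\infty=V^\mathrm{b}$). That generically fails, because freeing the per-subsequence initial states enlarges the feasible set and typically lowers the optimum, and it assumes away exactly the effect the theorem quantifies. Posing the growth bound directly relative to $V^\mathrm{b}$ at $(0,\theta^*)$ does not rescue it either. The point $(0,\theta^*)$ is not benchmark-feasible, so $V^*<V^\mathrm{b}$ can occur, and then no such bound with $\alpha>0$ can hold.

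The missing idea is the two-sided turnpike argument through $Y^\infty$:
(i) Bound $V^*-V^\infty\le\bar C\lambda^m/(N-m)$ via the TBPTT-feasible candidate $(0,\theta^\infty)$ (Lemma~\ref{lem:controllability}). This requires $\theta^\infty\in\Theta$, i.e.\ the relaxed problem must itself be restricted to stable parameters as in \eqref{eq:NLP_theta}.
(ii) Bound $V^\mathrm{b}-V^\infty$ the same way, using the benchmark-feasible candidate obtained by running $\theta^\infty$ along the whole sequence $X^\mathrm{d}$ (Remark~\ref{rem:TP_coupled}).
(iii) Apply the growth/coercivity inequality at both $Y^*$ and the benchmark outputs relative to $Y^\infty$. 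This gives $\frac{1}{S}\sum_{i,j}\|y^*_{j|i}-y^\infty_{j|i}\|^2\le K\lambda^m$ and the same bound for the benchmark.
(iv) Combine the two with $\|a-b\|^2\le 2\|a-c\|^2+2\|c-b\|^2$, giving $C_1=4K$.

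You also do not need strict dissipativity. For the MSE loss, optimality of $Y^\infty$ together with the exact quadratic expansion of $l_{j|i}$ already yields your growth inequality with constant $0$ (Lemma~\ref{lem:opti}). The paper's technical condition is only that this holds with a strictly positive constant $1-1/\epsilon$ at these two specific points (Lemma~\ref{lem:coercivity}), which fails only in a degenerate non-uniqueness case.
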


We provide the proof of Theorem~\ref{cor:TP_informal} in Appendix~\ref{app:regret} (Corollary~\ref{cor:TP}).
The first property in Theorem~\ref{cor:TP_informal} states that the sum of the averaged squared errors between the predictions generated by the TBPTT solution $\theta^*$ and the benchmark solution $\theta^\mathrm{b}$ along all subsequences $S$ is bounded by a constant.
As this constant does \emph{not} depend on $N$, this sum must become smaller if $N$ is increased, which necessarily implies that the TBPTT outputs get closer to the benchmark outputs. Moreover, the bound gets smaller for larger values of $m$ (as the outputs are less influenced by transients) and for smaller values of~$\lambda$ (as the outputs converge faster).
The second property given in Theorem~\ref{cor:TP_informal} reveals that the optimal value for $m$ when aiming for a small regret bound is in fact highly dependent on the convergence rate~$\lambda$; for slow convergence (larger values of $\lambda$), smaller values of $m$ are preferable, and \textit{vice versa}.

\begin{rem}
	The statement of Theorem~\ref{cor:TP_informal} (and Theorem~\ref{thm:P_informal} below) involves ``an additional technical condition''. As we show in detail in Appendix~\ref{appendix}, we need to exclude cases where the optimal output sequences fulfill a certain geometric relation, leading to the same value functions despite being different. However, this is rather of formal interest, since 1) it can be avoided in advance by choosing a suitable RNN architecture; 2) we conjecture that our theory can be generalized to these cases as well while preserving the key implications. For further details, see Remark~\ref{rem:epsilon} in Appendix~\ref{app:turnpike}.
\end{rem}

Our second result assesses the trained RNN when processing the full training input sequence $X^\mathrm{d}$.

\begin{thm}[informal]\label{thm:P_informal}
	Let the TBPTT solution $\theta^*$ of the problem in~\eqref{eq:NLP_SGD} and the benchmark solution $\theta^\mathrm{b}$ of the problem in~\eqref{eq:NLP_benchmark} be given and assume that both parameters render the respective RNN incrementally output stable in the sense of Assumption~\ref{ass:output_stability}. Let the burn-in phase parameter $m$ satisfy $ m\leq o_{\min}$, where $o_{\min}$ is the minimum overlap between subsequences as defined in~\eqref{eq:overlap_min}.
	Under an additional technical condition, there exist some $E_1,E_2>0$ such that the following properties hold:
	\begin{align*}
		1. & \ \  \text{Output accuracy:} \ \
		\frac{1}{T{-\,}m}\sum_{t=m+1}^{T} \|y_t(0,\theta^*,X^\mathrm{d}){\,-\,}y_t(h^\mathrm{b}_{0},\theta^\mathrm{b},X^\mathrm{d})\|^2 {\,\leq\, }  E_1
		\frac{(S{\,-\,}1)\lambda^{2o_{\min}}{\,+\,}S\lambda^m}{T-m}\\
		2. &\ \ \text{Performance regret:} \quad
		P(0,\theta^*;D) - P(h^\mathrm{b}_0,\theta^\mathrm{b};D) \leq  E_2\cdot \sqrt{\frac{(S-1)\lambda^{2o_{\min}} + S\lambda^m}{T-m}}
	\end{align*}
	Here, $\lambda\in(0,1)$ is from Assumption~\ref{ass:output_stability} and $P$ corresponds to the training performance defined in~\eqref{eq:performance}.
\end{thm}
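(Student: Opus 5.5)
The plan is to derive both properties of Theorem~\ref{thm:P_informal} from Theorem~\ref{cor:TP_informal}. We decompose the full-sequence output error into contributions from the individual subsequences and control the ``stitching'' errors at the subsequence boundaries using Assumption~\ref{ass:output_stability}.

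\emph{Partition of the time axis.} Because $m\le o_{\min}$, the windows $\{s_i+m,\dots,s_i+N-1\}$, $i\in\mathcal{S}$, cover $\{m+1,\dots,T\}$. We assign each $t$ to a unique index $i(t)$. For $i\ge 2$ we use the index set $\{s_{i-1}+N,\dots,s_i+N-1\}$, i.e., the part of $D_i$ not covered by $D_{i-1}$. For $i=1$ we use $\{m+1,\dots,N\}$. Within window $i$, every such $t$ then corresponds to a local index $j=t-s_i+1 \ge o_i+1 \ge o_{\min}+1$ for $i\ge2$, and $j\ge m+1$ for $i=1$.

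\emph{Triangle inequality for each $t$.} With $i=i(t)$ and $j$ as above, we split
\begin{align*}
y_t(0,\theta^*,X^\mathrm{d})-y_t(h^\mathrm{b}_0,\theta^\mathrm{b},X^\mathrm{d})
={}& \big[y_t(0,\theta^*,X^\mathrm{d})-y_j(0,\theta^*,X_i^\mathrm{d})\big]\\
&+\big[y_j(0,\theta^*,X_i^\mathrm{d})-y_j(h^\mathrm{b}_{s_i},\theta^\mathrm{b},X_i^\mathrm{d})\big].
\end{align*}
The second bracket needs no further work. By causality and the constraint~\eqref{eq:NLP_benchmark_con}, the benchmark output along the full sequence coincides with its output on subsequence $i$ started from $h^\mathrm{b}_{s_i}$, so this bracket is exactly the difference controlled in Theorem~\ref{cor:TP_informal}. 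For the first bracket, the full-sequence TBPTT network at time $s_i-1$ is in some state $\bar h_{s_i-1}$ and is then driven by the same inputs $X_i^\mathrm{d}$. Assumption~\ref{ass:output_stability} applied to $\theta^*$ therefore bounds this bracket by $C\lambda^j\|\bar h_{s_i-1}\|$. This requires a uniform bound on the hidden states generated from zero along the data. We obtain it from the compactness of $\mathcal{X}^\mathrm{d}$ together with the stability and boundedness properties made precise in the appendix, and it is one of the constants absorbed into $E_1$. Squaring with $(a+b)^2\le 2a^2+2b^2$ and summing over the $t$ assigned to window $i\ge2$ gives a geometric tail $\sum_{j> o_{\min}}\lambda^{2j}\lesssim \lambda^{2o_{\min}}$. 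For $i=1$ the first bracket vanishes, since both sequences start from zero on the same inputs. This produces the term $(S-1)\lambda^{2o_{\min}}$.

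\emph{Summing the second brackets.} These are bounded by the full sum $\sum_i\sum_{j=m+1}^N$, which by Theorem~\ref{cor:TP_informal}(1) is at most $S\,C_1\lambda^m$ (the theorem bounds $S^{-1}$ times this sum). Dividing everything by $T-m$ yields property~1.

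\emph{Property~2.} We write $P(0,\theta^*;D)-P(h^\mathrm{b}_0,\theta^\mathrm{b};D)=\frac{1}{T-m}\sum_t\big(\|a_t\|^2-\|b_t\|^2\big)$, where $a_t,b_t$ are the two residuals with respect to $y^\mathrm{d}_t$. Using $\|a\|^2-\|b\|^2\le \|a-b\|\,(\|a\|+\|b\|)$, Cauchy--Schwarz, and the uniform boundedness of the outputs (from the compact data sets and bounded states), we bound the regret by $E_2\sqrt{\cdot}$ times the root of property~1.

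\emph{Main obstacle.} The delicate step is the uniform state bound for $\bar h_{s_i-1}$ under $\theta^*$ along the whole sequence. Exponential incremental output stability alone does not bound states, so we will have to invoke the precise appendix version of Assumption~\ref{ass:output_stability} with its boundedness and compactness conditions. We must also check that the ``technical condition'' needed for Theorem~\ref{cor:TP_informal} transfers unchanged. The index bookkeeping that ensures $j\ge o_{\min}+1$ on each assigned block is routine but has to be carried out carefully.
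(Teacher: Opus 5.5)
Your proposal is correct and follows essentially the same route as the paper's proof. It uses the same split of $\{m+1,\dots,T\}$ into the first window and the non-overlapping tails $j\ge o_i+1$ of the later windows, and the same $(a+b)^2\le 2a^2+2b^2$ splitting, with the stitching term bounded via Assumption~\ref{ass:output_stability} and a geometric series. Corollary~\ref{cor:TP} then handles the remaining term (this is where $m\le o_{\min}$ enters), and a Lipschitz plus Cauchy--Schwarz (Jensen) step gives the regret. The state bound you flag as the main obstacle is handled in the paper simply by assuming, without loss of generality, that the full-sequence TBPTT states $h^*_t$ lie in the compact set $\mathcal{H}$ and the outputs in $\mathcal{Y}$ (cf.\ Remark~\ref{rem:set_Y}); this is harmless since $T$ is finite and $\theta^*$ is fixed, so no refined version of Assumption~\ref{ass:output_stability} is needed, and the technical $\epsilon>1$ condition is assumed for both solutions exactly as in Corollary~\ref{cor:TP}.
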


Theorem~\ref{thm:P_informal} is proven in Appendix \ref{app:regret_T} (Theorem~\ref{thm:P}).
It reveals similar dependencies of the tuning parameters on the training performance as Theorem~\ref{cor:TP_informal}, ensuring that the output sequence generated by the TBPTT solution is close to that of the benchmark, also when processing the full input sequence.
However, there are also subtle differences:
First, there is an additional condition requiring $m\leq o_{\min}$. This ensures that the full-batch loss functions used in~\eqref{eq:NLP_SGD} and~\eqref{eq:NLP_benchmark} involve all data samples $\{y^\mathrm{d}_t\}_{t=m+1}^T$.
Second, there is an additional influence of the number of training subsequences~$S$.
Still, the regret bounds are mainly influenced by the parameters $m$ and $\lambda$, whereby smaller values of $m$ are preferable for $\lambda\rightarrow 1$ and \textit{vice versa}.
If an upper bound for $\lambda$ is enforced during training, the optimal value of $m$ minimizing the regret estimates from Theorems~\ref{cor:TP_informal} and~\ref{thm:P_informal} could, in principle, be computed explicitly.
However, the estimates are generally conservative, as Assumption~\ref{ass:output_stability} (on which they are based) also covers the slowest possible decay and is therefore inherently conservative in \emph{quantitative} terms. Our theoretical results should therefore be understood as \emph{qualitative} tuning guidelines.
	
\section{Experiments}\label{sec:exp}

We validate our theory on various time series tasks (the code is available online at \url{https://github.com/jdschille/rnn-training}).
We first consider a small-scale problem of academic nature in Section~\ref{sec:exp_synthetic}, where we rely on synthetic data. This allows us to solve the benchmark problem~\eqref{eq:NLP_benchmark} using direct nonlinear optimization methods and hence to illustrate Theorems~\ref{cor:TP_informal} and \ref{thm:P_informal}.
In Section~\ref{sec:exp_TS}, we show that the obtained insights carry over to RNN training applied to public data sets from the fields of system identification and time series forecasting, using classical TBPTT and mini-batch SGD methods.
Here, we numerically tested the TBPTT-trained models for compliance with Assumption~\ref{ass:output_stability}, which was always fulfilled with $\lambda=0.99,\ldots,0.9999<1$.
However, as the benchmark problem~\eqref{eq:NLP_benchmark} can no longer be directly solved due to the size of the problem, we assess RNN performance by considering the MSE of the generated outputs with respect to training and test data and additionally compare it with BPTT and stateful (see Remark~\ref{rem:stateful}) TBPTT training results.

\subsection{Synthetic data}\label{sec:exp_synthetic}
We aim to model an unknown linear single-input single-output system, for which a noisy training data set of length $T=100$ is available (see Figure~\ref{fig:ex_academic_data} in Appendix~\ref{app:exp_academic}).
To this end, we design a linear RNN \citep{Proca2025} with a one-dimensional hidden state, corresponding to the model from \eqref{eq:elman} with $\phi(x)=x$ and $b_\mathrm{h}=b_\mathrm{y}=0$, where we additionally enforce $\|W_{\mathrm{hh}}\|\leq0.999$ during optimization to ensure Assumption~\ref{ass:output_stability}.
For training, we select $N=21$, set $S=T-N+1 = 80$ (compare Section~\ref{sec:data_segmentation}), and compute the solutions to the TBPTT problem~\eqref{eq:NLP_SGD} and benchmark problem \eqref{eq:NLP_benchmark} for different values of $m$ using the nonlinear programming solver IPOPT \citep{Waechter2005} in MATLAB.
The key training results depicted in Figure~\ref{fig:ex_academic_train} are in line with our theoretical results provided in Section~\ref{sec:results_main}; specifically, the left plot shows that the $j$-step batch-averaged output error $e_j := \frac{1}{S}\sum_{i=1}^{S}\|y_j(0,\theta^*,X^\mathrm{d}_i)-y_j(h_{s_i}^\mathrm{b},\theta^\mathrm{b},X^\mathrm{d}_i)\|^2$ converges to a neighborhood of zero, the size of which decreases with increasing $m$. Hence, larger values of $m$ allow for longer transient phases, and the RNN effectively exploits this during training to get closer to the benchmark outputs.
As indicated by the right plot of Figure~\ref{fig:ex_academic_train}, this behavior is also reflected in the resulting performance achieved over the training and an additional test data set (compare Appendix~\ref{app:exp_academic}). Here, the regret exponentially converges to zero with increasing $m$, and almost zero regret can be achieved when using $m\geq 12$.

\begin{figure}
	\centering
	\includegraphics[]{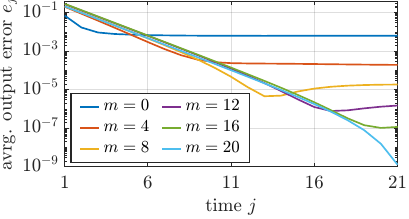}
	\hspace{2.5ex}
	\includegraphics[]{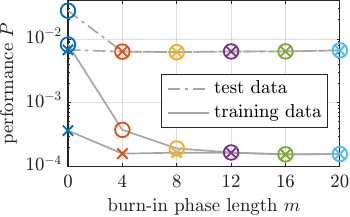}
	\caption{Training results for the synthetic data experiment for different values of the burn-in~phase~$m$. Left: batch-averaged output error $e_j$. Right: performance $P$ (the truncated MSE from  \eqref{eq:performance}) achieved by solving the TBPTT problem~\eqref{eq:NLP_SGD} (`$\circ$' markers) and the benchmark problem~\eqref{eq:NLP_benchmark} (`$\times$' markers).}
	\label{fig:ex_academic_train}
\end{figure}

\subsection{Data sets from system identification and time series forecasting}\label{sec:exp_TS}

In this section, we consider the training of RNNs on public data sets from the fields of 1) system identification: Silver-Box \citep{Wigren2013}, Wiener-Hammerstein (W-H) \citep{Schoukens2009}, RLC circuit (RLC) \citep{Forgione2021}; and 2) time series forecasting: Electricity, Traffic, Solar-Energy \citep{Lai2018,Wang2025}.

For system identification, we aim to design an RNN as a black-box model of the state-space representation of the respective underlying dynamical system. Here, we choose an LSTM cell with hidden state dimension $d_h{=\,}8$ and a linear output layer, see Appendix~\ref{app:exp} for further details.
For each data set, we train multiple RNN instances using TBPTT and various combinations of window length $N$ and burn-in phase $m$.
In Figure~\ref{fig:ex_training} (left) and Figure~\ref{fig:ex_app_TS_m} in Appendix~\ref{app:exp}, we observe a strong influence of~$m$ on the MSE achieved on the training and test data. Interestingly, the results show very similar qualitative behavior when varying $m$, regardless of $N$.
The training results are summarized in Table~\ref{tab:res_sysid_2} (upper half), where we chose $m^*$ based on an additional validation data set and refer to $\bar{m}=0$ as the baseline choice.
Consequently, proper tuning of $m$ can yield a significant reduction of the MSE on both training and test data compared to baseline TBPTT, narrowing the gap to BPTT performance on the training data while avoiding the risk of overfitting and reducing the training time (see Figure~\ref{fig:ex_training}).

\begin{figure}
	\centering
	\includegraphics[]{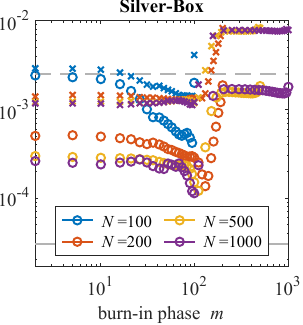}\hfill
	\includegraphics[]{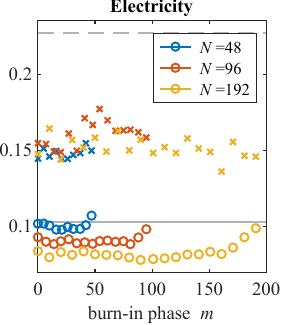}\ \
	\includegraphics[]{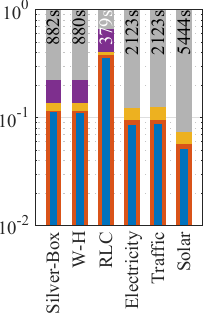}
	\caption{Training results. Left and middle: MSE of RNN predictions for training (`$\circ$') and test data (`$\times$') after (re-)training; gray lines correspond to the BPTT-trained RNN evaluated on the training (solid) and test data (dashed). Right: Averaged TBPTT training times relative to BPTT.}
	\label{fig:ex_training}
\end{figure}

For time series forecasting, we design an RNN to predict $F$ future values of the time series contained in the lookback window of length $N$, compare Figure~\ref{fig:RNN} (right). Here, we choose $F=96$ and consider lookback windows of different lengths $N=48,96,192$, compare \citet{Wang2025}. To simplify the training, we restrict ourselves to univariate forecasting and consider one LSTM cell with hidden state dimension $d_h{\,=\;}96$ and a linear output layer.
For each data set, we train multiple RNN instances for different values of $N$ and $m$. To this end, we split the available time series into training, validation, and test segments, where we first (pre-)train the RNN on the training segment, tune $m^*$ by additionally considering the validation segment, re-train the RNN using the combined training and validation segments, and ultimately evaluate the forecast performance on the test segment, see Appendix \ref{app:timeseries} for further details.
Figure~\ref{fig:ex_training} (middle) and Figure~\ref{fig:ex_app_TS_m} in Appendix~\ref{app:timeseries} show the forecast MSE after re-training the RNN using the considered combinations of $N$ and $m$. Here, we recall that $m{\,=\,}N{\,-\,}1$ corresponds to the standard configuration in time series forecasting (compare the discussion at the end of Section~\ref{sec:loss_function}), which we consider as the the baseline TBPTT with corresponding value $\bar{m}$.
In the bottom half of Table~\ref{tab:res_sysid_2}, we compare the MSE achieved after training using ${m{\,=\,}m^*}$ with the baseline TBPTT, stateful TBPTT (i.e., passing hidden states over lookback windows during training), and full BPTT using $m{\,=\,}0$ (which corresponds to inferring the forecast $F$ using all available past data).
Again, we find that tuning $m^*$ is beneficial overall, as we can achieve a significant reduction in the training and/or test MSE in the majority of cases.
Choosing $m{\,<\,}\bar{m}$ can be interpreted as additional regularization, which turns out to be numerically advantageous when performing the gradient updates during training.
The fact that TBPTT with zero initialization outperforms stateful TBPTT and BPTT training in some cases in Table~\ref{tab:res_sysid_2} is due to the positive effect of randomness in the training process (the data batches are shuffled, see Remark~\ref{rem:stateful}), which renders the training more numerically stable and yields faster convergence (note that we use the same number of iterations for all training methods).

\begin{table}[]
	\caption{Training on public system identification (sysid) and time series forecasting (TSF) data sets.\\[-1ex]}
	\label{tab:res_sysid_2}
	\begin{threeparttable}
		\begingroup\footnotesize\sf
		\addtolength{\tabcolsep}{-0.114em}
		\begin{tabular}{ccccccccccc}
			\toprule
			& & & \multicolumn{4}{c}{MSE on training data} & \multicolumn{4}{c}{MSE on test data} \\\cmidrule(lr){4-7}\cmidrule(lr){8-11}
			& $N$ & $m^*$ & TBP $\bar{m}$ & TBP $m^*$ & TBP sf & BP & TBP $\bar{m}$ & TBP $m^*$ & TBP sf & BP\\\midrule
			\multirow{4}{*}{\rotatebox{90}{Silver-Box}}
			& 100 & 94 & 0.242 & 0.042 (-83\%) & 0.059 & 0.003 & 0.291 & 0.142 (-51\%) & 0.222 & 0.251 \\
			& 200 & 94 & 0.050 & 0.030 (-41\%) & 0.048 & 0.003 & 0.140 & 0.135  (-4\%) & 0.221 & 0.251 \\
			& 500 & 94 & 0.030 & 0.019 (-37\%) & 0.051 & 0.003 & 0.121 & 0.122  (+1\%) & 0.185 & 0.251 \\
			&1000 & 94 & 0.026 & 0.012 (-54\%) & 0.032 & 0.003 & 0.117 & 0.112  (-4\%) & 0.146 & 0.251 \\
			\midrule
			\multirow{4}{*}{\rotatebox{90}{W-H}}
			& 100 & 21 & 0.220 & 0.153 (-31\%)      & 0.541 & 0.063 & 0.511 & 0.488 (-4\%)       & 1.014 & 0.445 \\
			& 200 & 36 & 0.168 & 0.138 (-18\%)      & 0.514 & 0.063 & 0.454 & 0.476 (+5\%)       & 0.961 & 0.445   \\
			& 500 &  0 & 0.135 & 0.135 (\textpm0\%) & 0.465 & 0.063 & 0.397 & 0.397 (\textpm0\%) & 0.875 & 0.445   \\
			&1000 &  0 & 0.116 & 0.116 (\textpm0\%) & 0.391 & 0.063 & 0.372 & 0.372 (\textpm0\%) & 0.737 & 0.445 \\
			\midrule
			\multirow{4}{*}{\rotatebox{90}{RLC}}
			& 100  &  83 & 2.290 & 1.122 (-51\%) & 1.101 & 0.107 & 3.527 & 1.093 (-69\%) & 1.183 & 0.176 \\
			& 200  & 110 & 0.971 & 0.309 (-68\%) & 1.057 & 0.107 & 1.462 & 0.307 (-79\%) & 1.186 & 0.176 \\
			& 500  & 130 & 0.442 & 0.186 (-58\%) & 0.716 & 0.107 & 0.538 & 0.228 (-58\%) & 0.932 & 0.176 \\
			& 1000 & 110 & 0.306 & 0.176 (-42\%) & 0.674 & 0.107 & 0.402 & 0.362 (-10\%) & 0.871 & 0.176 \\			
			\midrule
			\multirow{3}{*}{\rotatebox{90}{Electr.}}
			& 48 & 10 & 0.107 & 0.101  (-6\%) & 0.165 & 0.103 & 0.150 & 0.146 (-3\%) & 0.418 & 0.227 \\
			& 96 & 14 & 0.098 & 0.089 (-10\%) & 0.160 & 0.103 & 0.159 & 0.149 (-6\%) & 0.279 & 0.227 \\
			&192 & 20 & 0.099 & 0.083 (-16\%) & 0.152 & 0.103 & 0.146 & 0.144 (-1\%) & 0.219 & 0.227 \\
			\midrule
			\multirow{3}{*}{\rotatebox{90}{Traffic}}
			&  48 &  0 & 0.297 & 0.287  (-3\%) & 0.321 & 0.240 & 0.343 & 0.316 (-8\%) & 0.728 & 0.498 \\
			&  96 & 75 & 0.228 & 0.234  (+3\%) & 0.331 & 0.240 & 0.305 & 0.280 (-8\%) & 0.813 & 0.498 \\
			& 192 & 20 & 0.251 & 0.205 (-18\%) & 0.370 & 0.240 & 0.284 & 0.276 (-3\%) & 0.359 & 0.498 \\			
			\midrule
			\multirow{3}{*}{\rotatebox{90}{Solar}}
			& 48 & 31  & 0.300 & 0.312 (+4\%) & 0.312 & 0.152 & 0.250  & 0.257  (+3\%) & 0.772 & 0.110\\
			& 96 & 75  & 0.068 & 0.069 (+2\%) & 0.203 & 0.152 & 0.227  & 0.184 (-19\%) & 0.349 & 0.110\\
			&192 & 141 & 0.020 & 0.020 (-1\%) & 0.185 & 0.152 & 0.211  & 0.112 (-47\%) & 0.092 & 0.110\\
			\bottomrule			
		\end{tabular}
		\endgroup
		\begin{tablenotes}
			\item \small{RNNs are evaluated over the interval $[1,T]$ (sysid) or lookback windows of length $N$ (TSF); ``{\sf TBP $\bar{m}$}'': baseline TBPTT training using $m{=}\bar{m}{=}0$ (sysid) and $m{=}\bar{m}{=}N{-}1$ (TSF); ``{\sf TBP $m^*$}'': TBPTT training with $m{=}m^*$, where values in parentheses denote relative change with respect to the respective baseline; ``{\sf TBP sf}'': stateful TBPTT training using $m{=}\bar{m}$; ``{\sf BP}'': full BPTT training using $m{=}0$.
			The MSE values corresponding to system identification datasets (Silver-Box, W-H, RLC) are scaled by a factor of 10\textsuperscript{2}.}
		\end{tablenotes}
	\end{threeparttable}
\end{table}

\section{Conclusions}

In this paper, we developed the concept of burn-in phase from a heuristic to a theoretically grounded method for RNN training.
Our results indicate how the training hyperparameters---in particular, the length of the burn-in phase---influence the training performance. In experiments on public data sets, we showed that tuning the burn-in phase improves RNN performance compared to standard choices. 

\section*{Acknowledgments}
This work was supported by the Deutsche Forschungsgemeinschaft (DFG, German Research Foundation) as part of the Research Unit ``Active Learning for Systems and Control (ALeSCo)'' -- project number 535860958.

\appendix

\section{Theoretical analysis}\label{appendix}

This section contains proofs and further technical details of Theorems~\ref{cor:TP_informal} and \ref{thm:P_informal} stated in Section~\ref{sec:results_main}. Our analysis builds on a generalized optimization problem (Section~\ref{app:optim}), which captures the TBPTT problem from~\eqref{eq:NLP_SGD} and the benchmark problem from \eqref{eq:NLP_benchmark}. In Section~\ref{app:turnpike}, we show that this generalized problem exhibits an output turnpike property, which is used in Sections~\ref{app:regret} and \ref{app:regret_T} to prove Theorems~\ref{cor:TP_informal} and \ref{thm:P_informal}, respectively.

\subsection{The lifted optimization problem}\label{app:optim}

To allow for a precise technical analysis, we re-formulate the problems~\eqref{eq:NLP_SGD} and \eqref{eq:NLP_benchmark} in a multiple shooting fashion (see, e.g., Section 8.5 of the book by \citet{Rawlings2017} for further details on this topic).
More specifically, we implement the output mapping used in the respective loss functions as part of the problem constraints and introduce additional decision variables.

To this end, we define the containers of the predicted hidden state and output sequences as
\begin{equation}\label{eq:HY_def}
	H:=\{{h}_{j|i}\}_{\forall j\in\mathcal{N}_0,\forall i\in\mathcal{S}}, \quad Y:=\{{y}_{j|i}\}_{\forall j\in\mathcal{N},\forall i\in\mathcal{S}},
\end{equation}
where $\mathcal{N}_0:=\mathcal{N}\cup\{0\}$.
Furthermore, we define the stage cost $l_{j|i}:\mathbb{R}^{d_y}\rightarrow\mathbb{R}_{\geq0}$ associated with stage $j\in\mathcal{N}$ and subsequence $i\in\mathcal{S}$ according to
\begin{equation}\label{eq:stage_cost}
	l_{j|i}({y}) = \|{y}-y^\mathrm{d}_{j|i}\|^2.
\end{equation}
Now, we can express the loss functions used in~\eqref{eq:NLP_SGD} and~\eqref{eq:NLP_benchmark} as
\begin{equation}\label{eq:NLP_cost}
	 J_S(Y;D) := \frac{1}{S(N-m)}\sum_{i=1}^{S}\sum_{j=m+1}^N l_{j|i}(y_{j|i})
\end{equation}
and formulate the following optimization problem:
\begin{subequations}
	\label{eq:NLP}
	\begin{align}
		&\min\limits_{H,Y,\theta} J_S(Y;D) \label{eq:NLP_min}\\
		\text{subject to \quad }
		&{h}_{j|i} = f(h_{j-1|i},x_{j|i}^\mathrm{d};\theta), \ \forall j\in\mathcal{N}, \ \forall i\in\mathcal{S}, \label{eq:NLP_f}\\
		& y_{j|i} = g(h_{j|i},x_{j|i}^\mathrm{d};\theta), \ \forall j\in\mathcal{N}, \ \forall i\in\mathcal{S},\label{eq:NLP_g}\\
		& \{h_{j|i}\}_{j=0}^N \in\mathcal{H}^{N+1}, \ \{y_{j|i}\}_{j=1}^N\in\mathcal{Y}^N, \quad \forall i\in\mathcal{S} \label{eq:NLP_HY}\\
		& \theta\in \Theta, \label{eq:NLP_theta}\\
		&R(H)=0. \label{eq:NLP_h}
	\end{align}
\end{subequations}

The constraints in \eqref{eq:NLP_f} and \eqref{eq:NLP_g} imply that over each subsequence $i\in\mathcal{S}$, the corresponding hidden state and output sequences satisfy the network equations from~\eqref{eq:RNN_1} and \eqref{eq:RNN_2} for all $j\in\mathcal{N}$.
Condition~\eqref{eq:NLP_HY} ensures that these sequences have elements from the sets $\mathcal{H}\subset\mathbb{R}^{d_h}$ and $\mathcal{Y}\subset\mathbb{R}^{d_y}$, respectively, where we assume that $\mathcal{H}$ and $\mathcal{Y}$ are compact. Note that this is required for technical reasons in the proof of our results below and in particular does not need to be incorporated in, e.g., Algorithm~\ref{alg:SGD} (as suitable sets $\mathcal{H}$ and $\mathcal{Y}$ can be found in any case after successful training, compare also Remark~\ref{rem:set_Y} below).
The constraint in~\eqref{eq:NLP_theta} ensures that the resulting parameter $\theta$ leads to a network model that is incrementally output stable in the sense of Assumption~\ref{ass:output_stability}.

Depending on the last constraint in~\eqref{eq:NLP_h}, we consider three different settings:
\begin{enumerate}
	\item The TBPTT case:
	\begin{equation}\label{eq:RH_SGD}
		R(H) = [h_{0|1},...,h_{0|S}].
	\end{equation}
	This results in each initial hidden state $h_{0|i}$, $i\in\mathcal{S}$, being initialized to zero. From a theoretical perspective, the problem~\eqref{eq:NLP} under~\eqref{eq:RH_SGD} is fully equivalent to the original TBPTT problem in~\eqref{eq:NLP_SGD} (provided that the solution to the latter satisfies Assumption~\ref{ass:output_stability}, i.e., $\theta^*\in\Theta$). We denote the solution to~\eqref{eq:NLP} under~\eqref{eq:RH_SGD} as $(H^*,Y^*,\theta^*)$.
	
	\item The coupled case:
	\begin{equation}\label{eq:RH_full}
		R(H) = [h_{0|2} - h_{N-o_2|1},...,h_{0|S}-h_{N-o_S|S-1}].
	\end{equation}
	This condition implies that for each $i\in\mathcal{S}$ with $i\geq2$, the corresponding initial hidden state $h_{0|i}$ is equal to the hidden state $h_{N-o_i|i-1}$, where $o_i$ corresponds to the overlap between the subsequences $i$ and $i-1$ as defined in~\eqref{eq:overlap_i}.
	As a consequence, all hidden state sequences contained in $H$ are a subsequence of the same global hidden state sequence of length $T+1$, where the initial hidden state state $h_{0|1}$ is left as a free optimization variable.
	We denote the solution to~\eqref{eq:NLP} under~\eqref{eq:RH_full} as $(H^\mathrm{c},Y^\mathrm{c},\theta^\mathrm{c})$.
	Again, from a theoretical point of view, the problem formulation in~\eqref{eq:NLP} under~\eqref{eq:RH_full} is fully equivalent to the benchmark problem in~\eqref{eq:NLP_benchmark} (provided that $\theta^\mathrm{b}\in\Theta$). In particular, their solutions sets coincide in the sense that $\theta^\mathrm{b} = \theta^\mathrm{c}$ and
	\begin{equation}\label{eq:equivalence}
		h^\mathrm{b}_{s_i+j-1} = h^\mathrm{c}_{j|i},\quad \forall j\in\mathcal{N}_0,\quad \forall i\in\mathcal{S}.
	\end{equation}
	
	\item The unconstrained case:
	\begin{equation}\label{eq:RH_TP}
		R(H)=0.
	\end{equation}
	Here, we do not enforce any additional constraints on the hidden state sequences contained in $H$ (except compliance with the model equations~\eqref{eq:NLP_f} and~\eqref{eq:NLP_g}).
 	We denote the solution to~\eqref{eq:NLP} under~\eqref{eq:RH_TP} as $(H^\infty,Y^\infty,\theta^\infty)$.
\end{enumerate}

\begin{rem}\label{rem:set_Y}
	Without loss of generality, we assume that the compact set $\mathcal{Y}$ used in~\eqref{eq:NLP_HY} is large enough such that $y_j(0,\theta^\infty,X^\mathrm{d}_i)\in\mathcal{Y}$ and $y_j(0,\theta^\mathrm{c},X^\mathrm{d}_i)\in\mathcal{Y}$ for all $j\in\mathcal{N}$ and all $i\in\mathcal{S}$, and furthermore, $y_t(0,\theta^*,X^\mathrm{d})\in\mathcal{Y}$ and $y_t(h_{0|1}^\mathrm{c},\theta^\mathrm{c},X^\mathrm{d})\in\mathcal{Y}$ for all times $t\in[1,T]$.
\end{rem}

\subsection{The output turnpike property}\label{app:turnpike}

In this section, we establish a relation between $(H^*,Y^*,\theta^*)$ and the unconstrained solution $(H^\infty,Y^\infty,\theta^\infty)$.
Note that this is an intermediate step which enables us to derive the desired relation between $(H^*,Y^*,\theta^*)$ and the coupled solution $(H^\mathrm{c},Y^\mathrm{c},\theta^\mathrm{c})$ in Sections~\ref{app:regret} and~\ref{app:regret_T}.
Our analysis is inspired by classical turnpike theory, which is an important concept in the context of optimal control \citep{Trelat2025,Faulwasser2022,Zaslavski2006,Carlson1991}.
In particular, we prove a turnpike property of the optimization problem~\eqref{eq:NLP}, characterized in the output space. For the sake of clarity, we provide the following roadmap:
\begin{enumerate}
	\item We first establish an upper bound on the difference of the value functions resulting from solving~\eqref{eq:NLP} under~\eqref{eq:RH_SGD} and~\eqref{eq:RH_TP} (see Lemma \ref{lem:controllability}). When interpreting the parameter $\theta$ as the (constant) control input to the dynamical system~\eqref{eq:RNN_1}, this result can be interpreted as a stabilizability property of the latter, characterized in terms of the cost function \eqref{eq:NLP_cost}: When increasing the prediction horizon $N$, the bound on the cost difference stays the same, which implies that the optimal input $\theta^*$ effectively stabilizes the ``output reference'' $Y^\infty$.
	
	\item In Lemma~\ref{lem:opti}, we establish a structural relation between the outputs $Y^*$ and $Y^\infty$ using the fact that both correspond to optimal solutions of the same optimization problem~\eqref{eq:NLP}.
	
	\item This result is used in Lemma~\ref{lem:coercivity} to prove a coercivity property of the difference of the value functions resulting from solving~\eqref{eq:NLP} under~\eqref{eq:RH_SGD} and~\eqref{eq:RH_TP}: The difference is positive definite in the sum of the differences of the corresponding output predictions.
	
	\item Finally, by employing Lemmas~\ref{lem:controllability} and \ref{lem:coercivity}, we can characterize the desired turnpike property of the problem~\eqref{eq:NLP} (see Proposition~\ref{prop:TP}), which states that $Y^*$ must be close to $Y^\infty$ (i.e., the output turnpike) for most of the time. We provide a detailed interpretation of this result below Proposition~\ref{prop:TP}.
\end{enumerate}

\begin{lem}[Stabilizability]\label{lem:controllability}
	Let Assumption~\ref{ass:output_stability} hold. Then, there exists a constant $\bar{C}>0$ such that
	\begin{equation}\label{eq:controllability}
		\sum_{i=1}^{S}\sum_{j=m+1}^{N}l_{j|i}({y}_{j|i}^*)-l_{j|i}({y}^\infty_{j|i}) \leq \bar{C}S\lambda^m,
	\end{equation}
	where $\lambda\in(0,1)$ is from Assumption~\ref{ass:output_stability}.
\end{lem}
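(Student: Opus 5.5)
The plan is a suboptimality (feasible-candidate) argument: compare the optimal TBPTT cost with the cost of a particular candidate that is feasible for the TBPTT problem. The candidate uses the unconstrained optimal parameter $\theta^\infty$ with zero initial hidden states. Concretely, set $\tilde\theta:=\theta^\infty$ and $\tilde h_{0|i}:=0$ for all $i\in\mathcal{S}$, propagate \eqref{eq:NLP_f}--\eqref{eq:NLP_g}, and obtain outputs $\tilde y_{j|i}=y_j(0,\theta^\infty,X^\mathrm{d}_i)$.

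This triple is feasible for \eqref{eq:NLP} under \eqref{eq:RH_SGD}:
\begin{itemize}
\item $\theta^\infty\in\Theta$ holds by the constraint \eqref{eq:NLP_theta};
\item the outputs lie in $\mathcal{Y}$ by Remark~\ref{rem:set_Y};
\item the zero initialization satisfies $R(H)=0$ in \eqref{eq:RH_SGD};
\item the hidden states lie in $\mathcal{H}$, which I would take as part of the standing compactness choice, enlarging $\mathcal{H}$ analogously to Remark~\ref{rem:set_Y}.
\end{itemize}
Optimality of $(H^*,Y^*,\theta^*)$ then gives $\sum_{i}\sum_{j>m} l_{j|i}(y^*_{j|i})\le\sum_i\sum_{j>m} l_{j|i}(\tilde y_{j|i})$. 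It therefore suffices to bound $\sum_i\sum_{j>m}\big(l_{j|i}(\tilde y_{j|i})-l_{j|i}(y^\infty_{j|i})\big)$.

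Each $y^\infty_{j|i}$ equals $y_j(h^\infty_{0|i},\theta^\infty,X^\mathrm{d}_i)$, so the candidate and the unconstrained solution share the same parameter and input subsequence and differ only in their initial hidden state. Assumption~\ref{ass:output_stability}, applied with $X=X^\mathrm{d}_i$ (a subsequence of the training inputs), gives
\[
\|\tilde y_{j|i}-y^\infty_{j|i}\|\le C\lambda^j\|h^\infty_{0|i}\|\le C\bar h\lambda^j,
\]
where $\bar h:=\max_{h\in\mathcal{H}}\|h\|$ is finite by compactness of $\mathcal{H}$.

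Next, bound the stage-cost difference using its quadratic structure. With $a=\tilde y-y^\mathrm{d}_{j|i}$ and $b=y^\infty-y^\mathrm{d}_{j|i}$,
\[
\|a\|^2-\|b\|^2=(a-b)^\top(a+b)\le\|a-b\|\,(\|a\|+\|b\|)\le 2D\,\|\tilde y_{j|i}-y^\infty_{j|i}\|,
\]
where $D$ bounds $\|y-y^\mathrm{d}\|$ over $y\in\mathcal{Y}$ and $y^\mathrm{d}\in\mathcal{Y}^\mathrm{d}$, again finite by compactness. This gives a per-stage bound $2DC\bar h\lambda^j$. Summing over $j=m+1,\dots,N$ gives a geometric series bounded by $2DC\bar h\lambda^{m+1}/(1-\lambda)\le 2DC\bar h\lambda^m/(1-\lambda)$, independently of $N$. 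Summing over $i\in\mathcal{S}$ then yields \eqref{eq:controllability} with $\bar C:=2DC\bar h/(1-\lambda)$.

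The main obstacle is the precise use of Assumption~\ref{ass:output_stability}. The constants $C,\lambda$ must be uniform over all $\theta\in\Theta$, or at least valid for $\theta^\infty$, so that the same $\lambda$ appears in the final bound. The hidden states also need to be bounded uniformly, which is where compactness of $\mathcal{H}$ is essential. I would state explicitly that $C,\lambda$ are understood uniformly over $\Theta$ (or take those associated with $\theta^\infty$). I would also verify feasibility of the candidate's hidden states, by enlarging $\mathcal{H}$ as in Remark~\ref{rem:set_Y} if necessary. The rest of the argument is routine.
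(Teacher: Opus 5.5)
Your proposal is correct and follows essentially the same route as the paper. Both arguments use the zero-initialized candidate built from $\theta^\infty$, optimality of $Y^*$, a Lipschitz bound on the quadratic stage cost over the compact sets (your $2D$ plays the role of the paper's $L_l$), Assumption~\ref{ass:output_stability} together with $\|h^\infty_{0|i}\|\le\bar h$, and the geometric series. Your remarks that the candidate's hidden states must lie in $\mathcal{H}$ and that $C,\lambda$ must be valid for $\theta^\infty$ make explicit points the paper leaves implicit, and dropping the extra factor $\lambda$ only changes $\bar C$.
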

\begin{proof}
	The claim follows from the fact that $\theta^\infty\in\Theta$ by~\eqref{eq:NLP_theta} and Assumption~\ref{ass:output_stability}. In particular, let us consider the candidate solution $(\hat{H},\hat{Y},\theta^\infty)$ such that $\hat{y}_{j|i} = y_j(0,\theta^\infty,X^\mathrm{d}_i)$ for all $j\in\mathcal{N}$ and $i\in\mathcal{S}$. Here, $\hat{Y}$ contains $S$ output sequences of length $N$, each of which is generated by the RNN with $\theta^\infty$, initialized at zero, and being driven by the respective subsequences of the training input data $X^\mathrm{d}_i$, $i\in\mathcal{S}$.
	Note also that, without loss of generality, we can assume that the compact set $\mathcal{Y}$ is such that $\hat{y}_{j|i}\in\mathcal{Y}$ for all $j\in\mathcal{N}$ and all $i\in\mathcal{S}$, compare Remark~\ref{rem:set_Y}.
	Consequently, $(\hat{H},\hat{Y},\theta^\infty)$ satisfies the constraints~\eqref{eq:NLP_f}--\eqref{eq:NLP_h} and \eqref{eq:RH_SGD} and thus constitutes a feasible candidate solution to the problem~\eqref{eq:NLP} under \eqref{eq:RH_SGD}. By optimality of $Y^*$, we have that $J_S(Y^*;D)\leq J_S(\hat{Y};D)$, which yields
	\begin{equation}\label{eq:proof_contr_1}
		\sum_{i=1}^{S}\sum_{j=m+1}^{N}l_{j|i}({y}^*_{j|i})-l_{j|i}({y}^\infty_{j|i}) \leq \sum_{i=1}^{S}\sum_{j=m+1}^{N}l_{j|i}(\hat{y}_{j|i})-l_{j|i}({y}^\infty_{j|i}).
	\end{equation}
	Because both ${\hat{y}_{j|i}}$ and ${y}^\infty_{j|i}$ are elements of the compact set $\mathcal{Y}$ for each $j\in\mathcal{N}$ and $i\in\mathcal{S}$ and the stage cost $l_{j|i}(y)$ is quadratic in $y$, there exists a Lipschitz constant $L_l>0$ such that
	\begin{equation}\label{eq:proof_contr_Lipschitz}
		l_{j|i}(\hat{y}_{j|i})-l_{j|i}({y}^\infty_{j|i}) \leq \|l_{j|i}(\hat{y}_{j|i})-l_{j|i}({y}^\infty_{j|i})\| \leq L_l\|\hat{y}_{j|i}-{y}^\infty_{j|i}\|
	\end{equation}
	uniformly for all $\hat{y}_{j|i},{y}^\infty_{j|i}\in\mathcal{Y}$ and all $j\in\mathcal{N}$ and $i\in\mathcal{S}$. (In fact, $L_l = 2(\max_{y\in\mathcal{Y}}\|y\| + \max_{y\in\mathcal{Y}^\mathrm{d}}\|y\|)$ is a valid choice, recalling that $\mathcal{Y}$ and $\mathcal{Y}^d$ are compact).
	Moreover, due to the fact that the output predictions $\hat{y}_{j|i}$ and ${y}^\infty_{j|i}$ are generated using the RNN with the same parameter $\theta^\infty\in\Theta$ but different initial hidden states, we can apply the property in~\eqref{eq:output_stability} from Assumption~\ref{ass:output_stability} for each $i\in\mathcal{S}$ using $h_0^{(1)} = 0$ and $h_0^{(2)}=h^\infty_{0|i}$, which lets us infer that
	\begin{equation}\label{eq:proof_contr_stability}
		\|\hat{y}_{j|i}-{y}^\infty_{j|i}\| \leq  C\lambda^j\|h^\infty_{0|i}\| \leq C\bar{h}\lambda^j,\ j\in\mathcal{N}, \ i\in\mathcal{S},
	\end{equation}
	where $\bar{h}:=\max_{h\in\mathcal{H}}\|h\|$ using~\eqref{eq:NLP_HY} and the fact that $\mathcal{H}$ is compact.
	The combination of \eqref{eq:proof_contr_1}--\eqref{eq:proof_contr_stability} together with the application of the geometric series then leads to
	\begin{align*}
		&\sum_{i=1}^{S}\sum_{j=m+1}^{N}l_{j|i}({y}^*_{j|i})-l_{j|i}({y}^\infty_{j|i})\\
		&\leq L_lC\bar{h}\sum_{i=1}^{S}\sum_{j=m+1}^{N}\lambda^j
		 = L_lC\bar{h}\sum_{i=1}^{S}\lambda^m\frac{\lambda-\lambda^{N-m+1}}{1-\lambda}
		\leq L_lC\bar{h}\sum_{i=1}^{S}\lambda^{m}\frac{\lambda}{1-\lambda}.
	\end{align*}
	By defining $\bar{C}:=L_lC\bar{h}\lambda/({1-\lambda})$ we obtain~\eqref{eq:controllability}, which finishes this proof. 
\end{proof}

\begin{lem}\label{lem:opti}
	Consider the solutions $(H^*,Y^*,\theta^*)$ and $(H^\infty,Y^\infty,\theta^\infty)$. There exists a constant $\epsilon\geq 1$ such that
	\begin{equation}\label{eq:lem_opt}
		\sum_{i=1}^{S}\sum_{j=m+1}^{N} 2({y}_{j|i}^\infty-{y}_{j|i}^\mathrm{d})^\top({y}_{j|i}^*-{y}_{j|i}^\infty) \geq-\frac{1}{\epsilon}\sum_{i=1}^{S}\sum_{j=m+1}^{N}\|{y}_{j|i}^*-{y}_{j|i}^\infty\|^2.
	\end{equation}
\end{lem}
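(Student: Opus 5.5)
The plan rests on two facts: $(H^\infty,Y^\infty,\theta^\infty)$ is optimal for the unconstrained problem, and $(H^*,Y^*,\theta^*)$ is feasible for it. The reason is that \eqref{eq:RH_TP} imposes no coupling, so every solution of \eqref{eq:NLP} under \eqref{eq:RH_SGD} is also feasible under \eqref{eq:RH_TP}. Optimality of $Y^\infty$ therefore gives $J_S(Y^\infty;D)\le J_S(Y^*;D)$, that is,
\begin{equation*}
\sum_{i=1}^{S}\sum_{j=m+1}^{N} l_{j|i}(y^*_{j|i})-l_{j|i}(y^\infty_{j|i})\;\ge\;0 .
\end{equation*}

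Next, expand the quadratic stage cost exactly around $y^\infty_{j|i}$:
\begin{equation*}
l_{j|i}(y^*_{j|i})-l_{j|i}(y^\infty_{j|i}) = 2(y^\infty_{j|i}-y^\mathrm{d}_{j|i})^\top(y^*_{j|i}-y^\infty_{j|i}) + \|y^*_{j|i}-y^\infty_{j|i}\|^2 .
\end{equation*}
Summing and writing $A$ for the double sum of the linear terms and $B$ for the double sum of $\|y^*_{j|i}-y^\infty_{j|i}\|^2$, optimality yields $A+B\ge 0$, i.e.\ $A\ge -B$. This is already \eqref{eq:lem_opt} with $\epsilon=1$.

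The only remaining issue is what the lemma intends by ``there exists $\epsilon\ge1$''. Presumably the coercivity argument in Lemma~\ref{lem:coercivity} needs $\epsilon>1$ strictly, so that $A+B\ge(1-1/\epsilon)B$ is positive definite in $B$. That is the genuine obstacle, and it matches the ``additional technical condition'' in Remark~\ref{rem:epsilon}.

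For the statement as written, I would take $\epsilon=1$ and be done. To obtain some $\epsilon>1$, I would define it case by case. If $B=0$, any $\epsilon$ works. If $B>0$, the claim $A\ge -B/\epsilon$ holds with $\epsilon>1$ exactly when $A>-B$, i.e.\ when $J_S(Y^*;D)>J_S(Y^\infty;D)$. One would then set $\epsilon:=B/(-A)$ if $A<0$, and $\epsilon$ arbitrary if $A\ge0$. The degenerate situation is $A=-B$ with $B>0$, where the two problems have equal optimal value but distinct optimal outputs. This is precisely the ``geometric relation'' the paper says must be excluded as a technical condition, so I would state that this case is ruled out and otherwise conclude with the explicit $\epsilon$ above.
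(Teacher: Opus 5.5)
Your proof is correct and is essentially the paper's argument: $(H^*,Y^*,\theta^*)$ is feasible for the unconstrained problem, so $J_S(Y^\infty;D)\le J_S(Y^*;D)$, and the exact second-order expansion of the quadratic stage cost turns this into \eqref{eq:lem_opt} with $\epsilon=1$. Your case analysis for when some $\epsilon>1$ is available (it fails only if $A=-B$ with $B\neq 0$) matches the paper's Remark~\ref{rem:epsilon}.
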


\begin{proof}
	The property in~\eqref{eq:lem_opt} follows from optimality of $(H^*,Y^*,\theta^*)$ and $(H^\infty,Y^\infty,\theta^\infty)$.
	In particular, since $(H^*,Y^*,\theta^*)$ represents a feasible candidate solution to the unconstrained problem~\eqref{eq:NLP} under~\eqref{eq:RH_TP}, we know that $J_S(Y^\infty)\leq J_S(Y^*)$,
	which leads to
	\begin{equation}\label{eq:proof:opti_1}
		\sum_{i=1}^{S}\sum_{j=m+1}^{N}l_{j|i}({y}_{j|i}^*)-l_{j|i}({y}^\infty_{j|i}) \geq 0.
	\end{equation}
	Due to the fact that $l_{j|i}(y)$ is quadratic in $y\in\mathcal{Y}$, it is equivalent to its second-order Taylor polynomial. Consequently, we can express $l_{j|i}(y^*_{j|i})$ in terms of $l_{j|i}(y^\infty_{j|i})$ as
	\begin{align*}
		l_{j|i}(y^*_{j|i}) 
		&= l_{j|i}(y^\infty_{j|i}) + \frac{\partial l_{j|i}}{\partial y}(y^\infty_{j|i})^\top(y^*_{j|i}-y^\infty_{j|i}) + \frac{1}{2}(y^*_{j|i}-y^\infty_{j|i})^\top \mathfrak{H}(y^*_{j|i}-y^\infty_{j|i})\\
		&= l_{j|i}(y^\infty_{j|i}) + 2(y^\infty_{j|i}-y_{j|i}^\mathrm{d})^\top(y^*_{j|i}-y^\infty_{j|i}) + \|y^*_{j|i}-y^\infty_{j|i}\|^2,
	\end{align*}
	where $\mathfrak{H}$ is the Hessian of $l_{j|i}$ with respect to $y\in\mathcal{Y}$ and simply reduces to $\mathfrak{H}=2I_p$.
	Thus, the left-hand side of~\eqref{eq:proof:opti_1} can be re-written as
	\begin{equation}
		\sum_{i=1}^{S}\sum_{j=m+1}^{N}l_{j|i}({y}_{j|i}^*)-l_{j|i}({y}^\infty_{j|i}) = \sum_{i=1}^{S}\sum_{j=m+1}^{N}\left(\|{y}_{j|i}^*-{y}_{j|i}^\infty\|^2 + 2(y^\infty_{j|i}-y_{j|i}^\mathrm{d})^\top({y}_{j|i}^*-{y}_{j|i}^\infty)\right).\label{eq:proof_l_difference}
	\end{equation}
	The inequality in~\eqref{eq:proof:opti_1} then implies that
	\begin{equation*}
		\sum_{i=1}^{S}\sum_{j=m+1}^{N}2(y^\infty_{j|i}-y_{j|i}^\mathrm{d})^\top({y}_{j|i}^*-{y}_{j|i}^\infty)
		\geq -\sum_{i=1}^{S}\sum_{j=m+1}^{N}\|{y}_{j|i}^*-{y}_{j|i}^\infty\|^2,
	\end{equation*}
	which establishes the desired property in~\eqref{eq:lem_opt} for $\epsilon=1$ and hence concludes this proof.
\end{proof}

\begin{lem}[Coercivity]\label{lem:coercivity}
	Let the property in \eqref{eq:lem_opt} hold for some $\epsilon>1$. Then, the solutions $(H^*,Y^*,\theta^*)$ and $(H^\infty,Y^\infty,\theta^\infty)$ satisfy
	\begin{equation}\label{eq:lem_coercive}
		\sum_{i=1}^{S}\sum_{j=m+1}^{N}l_{j|i}({y}_{j|i}^*)-l_{j|i}({y}^\infty_{j|i}) \geq \frac{\epsilon-1}{\epsilon} \sum_{i=1}^{S}\sum_{j=m+1}^{N}\|{y}^*_{j|i}-{y}_{j|i}^\infty\|^2.
	\end{equation}
\end{lem}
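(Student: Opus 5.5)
The proof reuses the exact quadratic expansion from the proof of Lemma~\ref{lem:opti}. Since $l_{j|i}$ is quadratic with Hessian $2I$, the identity \eqref{eq:proof_l_difference} holds exactly:
\begin{equation*}
\sum_{i=1}^{S}\sum_{j=m+1}^{N}l_{j|i}({y}_{j|i}^*)-l_{j|i}({y}^\infty_{j|i}) = \sum_{i=1}^{S}\sum_{j=m+1}^{N}\|{y}_{j|i}^*-{y}_{j|i}^\infty\|^2 + \sum_{i=1}^{S}\sum_{j=m+1}^{N}2(y^\infty_{j|i}-y_{j|i}^\mathrm{d})^\top({y}_{j|i}^*-{y}_{j|i}^\infty).
\end{equation*}
This identity uses no optimality at all. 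It only requires that $y^*_{j|i}$ and $y^\infty_{j|i}$ be points of $\mathbb{R}^{d_y}$, so I will simply restate it.

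The second step is to bound the linear term from below. I invoke the hypothesis that \eqref{eq:lem_opt} holds for some $\epsilon>1$. This replaces the cross-term sum by something no smaller than $-\frac{1}{\epsilon}\sum_{i}\sum_{j}\|y^*_{j|i}-y^\infty_{j|i}\|^2$.

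Substituting gives a lower bound of $\left(1-\frac{1}{\epsilon}\right)\sum_i\sum_j\|y^*_{j|i}-y^\infty_{j|i}\|^2 = \frac{\epsilon-1}{\epsilon}\sum_i\sum_j\|y^*_{j|i}-y^\infty_{j|i}\|^2$, which is exactly \eqref{eq:lem_coercive}.

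There is no genuine obstacle here. The only subtlety is that Lemma~\ref{lem:opti} delivers only $\epsilon=1$, which makes the coercivity constant vanish. The strict inequality $\epsilon>1$ is therefore a standing assumption, namely the ``technical condition'' referred to in the main text, and is not something to be proven in this lemma. I would add a remark noting that $\frac{\epsilon-1}{\epsilon}>0$ precisely when $\epsilon>1$, so the bound is nontrivial only under that condition.
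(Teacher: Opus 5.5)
Your proposal is correct and follows the paper's proof: the paper likewise substitutes the hypothesis \eqref{eq:lem_opt} into the exact quadratic identity \eqref{eq:proof_l_difference} to obtain the coefficient $1-1/\epsilon=(\epsilon-1)/\epsilon$. Your observations that the identity needs no optimality, and that $\epsilon>1$ is a standing assumption rather than something this lemma proves, are also consistent with the paper.
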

\begin{proof}
	The claim follows by applying the property~\eqref{eq:lem_opt} to \eqref{eq:proof_l_difference} in the proof of Lemma~\ref{lem:opti}.
\end{proof}

\begin{rem}\label{rem:epsilon}
	Requiring $\epsilon>1$ in Lemma~\ref{lem:coercivity} is a rather mild technical condition on $Y^*$ with respect to $Y^\infty$.
	In particular, using $\kappa:=1/\epsilon\in(0,1)$, the result fails only if, for $\sum_{i=1}^{S}\sum_{j=m+1}^{N}\|{y}_{j|i}^*-{y}_{j|i}^\infty\|^2 \neq 0$, the following holds:
	\begin{equation*}
		-1\leq \frac{\sum_{i=1}^{S}\sum_{j=m+1}^{N}2({y}_{j|i}^\infty-{y}_{j|i}^\mathrm{d})^\top({y}_{j|i}^*-{y}_{j|i}^\infty)}{\sum_{i=1}^{S}\sum_{j=m+1}^{N}\|{y}_{j|i}^*-{y}_{j|i}^\infty\|^2}
		<-\kappa, \quad \forall \kappa\in(0,1),
	\end{equation*}
	which is the case if and only if
	\begin{equation}
		\sum_{i=1}^{S}\sum_{j=m+1}^{N}2({y}_{j|i}^\infty-{y}_{j|i}^\mathrm{d})^\top({y}_{j|i}^*-{y}_{j|i}^\infty)=-\sum_{i=1}^{S}\sum_{j=m+1}^{N}\|{y}_{j|i}^*-{y}_{j|i}^\infty\|^2\neq 0. \label{eq:remark}
	\end{equation}	
	Solutions to this equation correspond to the case where $J_S(Y^*;D){\,=\,}J_S(Y^\infty;D)$ but $\|{y}_{j|i}^*-{y}_{j|i}^\infty\|^2\neq0$ for some $j\in\mathcal{N}$ with $j\geq m+1$ and $i\in\mathcal{S}$.
	This may occur if $Y^\infty$ (specifically, its restriction to the interval $[m+1,N]$) is non-unique; in particular, the above implies that $Y^*$ then also corresponds to an optimal solution to the unconstrained problem.
	But this means that we can simply set $(H^\infty,Y^\infty,\theta^\infty) = (H^*,Y^*,\theta^*)$, rendering condition~\eqref{eq:lem_opt} again valid for any $\epsilon>1$.
	Alternatively, \eqref{eq:remark} could be avoided in advance by proper design of the RNN, ensuring that the output turnpike $Y^\infty$ is unique (which does \emph{not} require uniqueness of the solution with respect to the hidden state trajectory $H^\infty$ or $\theta^\infty$, compare also the discussion below Proposition~\ref{prop:TP}).
	Moreover, we also conjecture that our analysis can be extended to explicitly cover the case where $Y^\infty$ is non-unique using arguments inspired by \citet{Trelat2018a}; namely, by characterizing the turnpike property using a turnpike set $\mathcal{T}$ containing all possible output sequences achieving the minimum cost (instead of one particular output sequence) and by deriving the results in terms of the point-to-set distance of the solution $Y^*$ to the turnpike set $\mathcal{T}$.
	Finally, we would like to point out that in our synthetic experiment conducted in Section~\ref{sec:exp_synthetic}, where we were able to explicitly check this condition, it was always possible to choose $\epsilon>1$ such that property \eqref{eq:lem_opt} is satisfied (even when changing the network architecture and using different nonlinear activation functions).
\end{rem}

\begin{prop}[Output turnpike property]\label{prop:TP}
	Let Assumption~\ref{ass:output_stability} be satisfied.
	Consider the solutions $(H^*,Y^*,\theta^*)$ and $(H^\infty,Y^\infty,\theta^\infty)$ and suppose that the property in \eqref{eq:lem_opt} holds for some $\epsilon>1$. Then, there exists a constant $K>0$ such that
	\begin{equation}\label{eq:prop_TP_1}
	 	\sum_{j=m+1}^{N} \ \frac{1}{S}\sum_{i=1}^{S}\|{y}_{j|i}^*-{y}_{j|i}^\infty\|^2 \leq K\lambda^m,
	\end{equation}
	where $\lambda\in(0,1)$ is from Assumption~\ref{ass:output_stability}.
\end{prop}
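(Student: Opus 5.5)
The proof combines the two-sided estimates already available: the upper bound from Lemma~\ref{lem:controllability} and the lower bound from Lemma~\ref{lem:coercivity}. Both involve the same quantity, namely the summed stage-cost difference
\begin{equation*}
\Delta := \sum_{i=1}^{S}\sum_{j=m+1}^{N}\big(l_{j|i}({y}_{j|i}^*)-l_{j|i}({y}^\infty_{j|i})\big).
\end{equation*}

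\textbf{Upper bound.} Assumption~\ref{ass:output_stability} holds, so Lemma~\ref{lem:controllability} applies. It gives $\Delta\leq \bar{C}S\lambda^m$, where $\bar{C}=L_lC\bar{h}\lambda/(1-\lambda)$ is independent of $N$, $S$ and $m$.

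\textbf{Lower bound.} By hypothesis, \eqref{eq:lem_opt} holds for some $\epsilon>1$. Lemma~\ref{lem:coercivity} then yields
\begin{equation*}
\Delta\geq \frac{\epsilon-1}{\epsilon}\sum_{i=1}^{S}\sum_{j=m+1}^{N}\|{y}^*_{j|i}-{y}_{j|i}^\infty\|^2 .
\end{equation*}

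\textbf{Combining.} Chaining the two inequalities and dividing by $S(\epsilon-1)/\epsilon>0$ gives
\begin{equation*}
\frac{1}{S}\sum_{i=1}^{S}\sum_{j=m+1}^{N}\|{y}^*_{j|i}-{y}_{j|i}^\infty\|^2\leq \frac{\epsilon\bar{C}}{\epsilon-1}\lambda^m .
\end{equation*}
The two finite sums can be interchanged, which produces exactly the left-hand side of \eqref{eq:prop_TP_1}. This establishes the claim with $K:=\epsilon\bar{C}/(\epsilon-1)$.

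\textbf{Main obstacle.} The only delicate point is the admissibility of the constant $\epsilon>1$, and with it the strict positivity of the coercivity factor $(\epsilon-1)/\epsilon$. This is taken care of by the hypothesis of the proposition, with the degenerate case discussed in Remark~\ref{rem:epsilon}.

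A secondary point is that Lemma~\ref{lem:controllability} requires $\theta^\infty\in\Theta$ and uniform bounds $\bar{h}$ and $L_l$. These follow from constraints \eqref{eq:NLP_HY}--\eqref{eq:NLP_theta}, compactness of $\mathcal{H}$ and $\mathcal{Y}$, and Remark~\ref{rem:set_Y}. It is worth checking explicitly that none of these constants depends on $N$. This independence is what makes $K$ uniform in the horizon, and it is the source of the interpretation that the outputs $Y^*$ stay near the turnpike $Y^\infty$ for most of the horizon.
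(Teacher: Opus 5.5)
Your proposal is correct and follows essentially the same route as the paper. You sandwich the summed stage-cost difference between the coercivity lower bound of Lemma~\ref{lem:coercivity} and the stabilizability upper bound $\bar{C}S\lambda^m$ of Lemma~\ref{lem:controllability}, then divide by $S(\epsilon-1)/\epsilon$, which gives exactly the paper's constant $K=\epsilon\bar{C}/(\epsilon-1)$.
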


\begin{proof}
	The statement follows by considering the difference of the loss functions
	\begin{equation*}
		\sum_{i=1}^{S}\sum_{j=m+1}^{N}l_{j|i}({y}_{j|i}^*)-l_{j|i}({y}^\infty_{j|i})
	\end{equation*}
	 and applying the lower bound from Lemma~\ref{lem:coercivity} and the upper bound from Lemma~\ref{lem:controllability}. The definition of $K:=\bar{C}{\epsilon}/({\epsilon-1})$ with $\bar{C}$ from Lemma \ref{lem:controllability} then leads to \eqref{eq:prop_TP_1}, which finishes this proof.
\end{proof}

Some remarks are in order.

\begin{itemize}
	
	\item Proposition~\ref{prop:TP} provides a characterization of the $j$-step prediction error, averaged over all $S$ subsequences:
	\begin{equation*}
		e^{\infty}_j := \frac{1}{S}\sum_{i=1}^{S}\|{y}_{j|i}^*-{y}_{j|i}^\infty\|^2, \quad j\in\mathcal{N}.
	\end{equation*}
	Specifically, property \eqref{eq:prop_TP_1} implies that the sum of $e^{\infty}_j$ over $j\in[m+1,N]$ is uniformly bounded \emph{independent of the length of the subsequences~$N$}. For a fixed value of $m$, this must imply that $e^{\infty}_j$ must become smaller for larger values of $N$ for most of the times $j\in[m+1,N]$. By non-negativity of the norm, we can conclude a similar statement for the output predictions corresponding to each subsequence $i\in\mathcal{S}$: $y_{j|i}$ gets closer to $y_{j|i}^\infty$ for most of the times $j\in[m+1,N]$ when increasing the value of $N$. Hence, the latter can be regarded as the turnpike for the former, and the larger we choose the length $N$, the more elements of $y_{j|i}$ are close to $y_{j|i}^\infty$.
	
	\item The fact that the right-hand side of~\eqref{eq:prop_TP_1} is independent of $N$ is in line with intuition: Differences in the output predictions (and thus in the loss functions of the optimization problems under consideration) are caused solely by the burn-in phase of the RNN resulting from zero initialization, whose influence is limited and, for a fixed network parameterization~$\theta$, does not depend on $N$.
	
	\item The main difference to the turnpike-related literature is that we formulate the turnpike property with respect to the \emph{outputs}---without, as usual, characterizing the behavior of optimal states, controls, and possibly adjoints, compare \citet{Trelat2025,Faulwasser2022,Zaslavski2006,Carlson1991}. This is natural in our setting, as the set of hidden states and network parameters (i.e., the ``controls'') that generates a specific output sequence is typically \emph{not} a singleton (the same output sequence can usually be generated by a variety of hidden states and parameters without significantly restricting the network architecture).
		
	\item The turnpike characterization in \eqref{eq:prop_TP_1} is conceptually related to the \emph{interval turnpike property} from \citet{Faulwasser2022a}. Note that~\eqref{eq:prop_TP_1} can be easily reformulated in terms of a \textit{measure/cardinality turnpike property} by estimating the number of elements of $Y^*$ that lie outside a certain $\epsilon$-neighborhood of the turnpike $Y^\infty$.
	Similar characterizations are prominent in the most recent turnpike-related literature \citep{Trelat2025,Faulwasser2022,Gruene2019,Faulwasser2018}; however, the quantitative nature of the bound in~\eqref{eq:prop_TP_1} turns out to be more useful when establishing the results in Sections~\ref{app:regret} and \ref{app:regret_T} below.
	
	\item The fact that we establish the turnpike property (Proposition~\ref{prop:TP}) using stabilizability (Lemma~\ref{lem:controllability}) and coercivity (Lemma~\ref{lem:coercivity}) is in line with standard turnpike results (compare, e.g., \citet{Trelat2025,Trelat2018a,Gruene2016a}). There, however, coercivity is usually first established as a consequence of an additional dissipativity assumption on the problem, mainly due to the fact that the results are tailored to general cost functions.
	In contrast, here we can leverage the quadratic nature of the MSE loss and thus avoid an additional dissipativity condition. Nevertheless, we would like to point out that, as one might expect, Lemma~\ref{lem:coercivity} directly implies a dissipativity property, where the storage function can be chosen as a constant and the supply rate involves a strictness term capturing the output differences.
\end{itemize}

\subsection{Accuracy and regret over subsequences of the training data}\label{app:regret}

In this section, we leverage the turnpike property provided by Proposition~\ref{prop:TP} to establish accuracy and regret guarantees for the solution $(H^*,Y^*,\theta^*)$ with respect to the benchmark $(H^\mathrm{c},Y^\mathrm{c},\theta^\mathrm{c})$. The following should be noted in this regard.

\begin{rem}\label{rem:TP_coupled}
	The previous results (in particular, Lemmas~\ref{lem:controllability}--\ref{lem:coercivity} and Proposition~\ref{prop:TP}) also hold true if $(H^*,Y^*,\theta^*)$ in the statement of the results is replaced by the coupled solution $(H^\mathrm{c},Y^\mathrm{c},\theta^\mathrm{c})$. The respective proofs can be straightforwardly adapted using the fact that $(H^\mathrm{c},Y^\mathrm{c},\theta^\mathrm{c})$ is a solution to the optimization problem~\eqref{eq:NLP}.
\end{rem}

Hence, by combining Proposition~\ref{prop:TP} and Remark~\ref{rem:TP_coupled}, we can characterize the relation between $(H^*,Y^*,\theta^*)$ and $(H^\mathrm{c},Y^\mathrm{c},\theta^\mathrm{c})$ as shown in the following corollary.

\begin{cor}\label{cor:TP}
	Let Assumption~\ref{ass:output_stability} be satisfied.
	Suppose that there exists some $\epsilon>1$ small enough such that the property in~\eqref{eq:lem_opt} holds true for both $(H^*,Y^*,\theta^*)$ and $(H^\mathrm{c},Y^\mathrm{c},\theta^\mathrm{c})$, see Remark~\ref{rem:TP_coupled}. Then, the following statements hold:
	\begin{enumerate}
		\item Output accuracy: The learned output sequences contained in $Y^*$ and $Y^\mathrm{c}$ satisfy
		\begin{equation}\label{eq:rem_TP_1}
			\sum_{j=m+1}^{N} \ \frac{1}{S}\sum_{i=1}^{S}\|{y}_{j|i}^*-{y}_{j|i}^\mathrm{c}\|^2
			\leq 4K\lambda^m,
		\end{equation}
		where $K>0$ is from Proposition~\ref{prop:TP} and $\lambda\in(0,1)$ is from Assumption~\ref{ass:output_stability}.
		
		\item Training regret: $Y^*$ and $Y^\mathrm{c}$ satisfy
		\begin{equation}\label{eq:rem_TP_2}
			J_S(Y^*;D) - J_S(Y^\mathrm{c};D)\leq \bar{C} \cdot \frac{\lambda^m}{N-m},
		\end{equation}
		where $\bar{C}>0$ is from Lemma~\ref{lem:controllability} and $\lambda\in(0,1)$ is from Assumption~\ref{ass:output_stability}.
	\end{enumerate}
\end{cor}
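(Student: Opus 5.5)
The plan is to route both claims through the unconstrained solution $(H^\infty,Y^\infty,\theta^\infty)$, which acts as a common turnpike for $Y^*$ and $Y^\mathrm{c}$. For the output accuracy claim, I apply Proposition~\ref{prop:TP} once to $(H^*,Y^*,\theta^*)$. I apply it a second time, via Remark~\ref{rem:TP_coupled}, to $(H^\mathrm{c},Y^\mathrm{c},\theta^\mathrm{c})$. Both applications use the same $\epsilon>1$ from the hypothesis, so the same constant $K=\bar{C}\epsilon/(\epsilon-1)$ appears in each. This gives
\[
\sum_{j=m+1}^{N}\frac{1}{S}\sum_{i=1}^{S}\|y^*_{j|i}-y^\infty_{j|i}\|^2\le K\lambda^m,\qquad
\sum_{j=m+1}^{N}\frac{1}{S}\sum_{i=1}^{S}\|y^\mathrm{c}_{j|i}-y^\infty_{j|i}\|^2\le K\lambda^m .
\]
Next I use the elementary inequality $\|a-b\|^2\le 2\|a-c\|^2+2\|c-b\|^2$ with $c=y^\infty_{j|i}$, applied termwise. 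Summing over $i$ and $j$ gives a bound of $2K\lambda^m+2K\lambda^m=4K\lambda^m$, which is \eqref{eq:rem_TP_1}.

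The one point needing care is the coupled version of Lemma~\ref{lem:controllability}. There the candidate built from $\theta^\infty$ would have to satisfy the coupling constraint \eqref{eq:RH_full}, not zero initialization. I would sidestep this rather than re-prove it. The coupled problem is a relaxation of the TBPTT problem in the sense that matters here: the coupled feasible set contains the candidate that runs $\theta^\infty$ over the full sequence from $h_{0|1}=0$. By Assumption~\ref{ass:output_stability}, that candidate's outputs on each subsequence $i$ differ from $y^\infty_{j|i}$ by at most $C\lambda^j\|h^\infty_{0|i}-\tilde h_{s_i}\|\le 2C\bar h\lambda^j$, where $\tilde h_{s_i}$ is the candidate's hidden state at the start of subsequence $i$. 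This reproduces \eqref{eq:controllability} up to a factor of $2$, which can be absorbed into $\bar C$. The other two ingredients, Lemma~\ref{lem:opti} and Lemma~\ref{lem:coercivity}, transfer verbatim, since they only use that $(H^\mathrm{c},Y^\mathrm{c},\theta^\mathrm{c})$ is feasible for the unconstrained problem.

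For the training regret claim, the idea is to compare directly with the coupled optimum. The key construction is a TBPTT-feasible candidate: $\hat Y$ with $\hat y_{j|i}=y_j(0,\theta^\mathrm{c},X^\mathrm{d}_i)$, which lies in $\mathcal{Y}$ by Remark~\ref{rem:set_Y}. Optimality of $Y^*$ gives $J_S(Y^*;D)\le J_S(\hat Y;D)$, so
\[
J_S(Y^*;D)-J_S(Y^\mathrm{c};D)\le\frac{1}{S(N-m)}\sum_{i=1}^S\sum_{j=m+1}^N\big(l_{j|i}(\hat y_{j|i})-l_{j|i}(y^\mathrm{c}_{j|i})\big).
\]
I then bound this exactly as in the proof of Lemma~\ref{lem:controllability}. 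The Lipschitz bound \eqref{eq:proof_contr_Lipschitz} on $l_{j|i}$ over $\mathcal{Y}$ handles the stage costs. Assumption~\ref{ass:output_stability} applies because $\theta^\mathrm{c}\in\Theta$, with initial states $0$ and $h^\mathrm{c}_{0|i}\in\mathcal{H}$. The geometric series then finishes the estimate. Together these give a numerator of at most $\bar C S\lambda^m$, and dividing by $S(N-m)$ yields \eqref{eq:rem_TP_2}.

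I expect the main obstacle to be the constant bookkeeping. The plan is to let $\bar C$ denote a common Lipschitz–stability constant valid for both $\theta^\infty$ and $\theta^\mathrm{c}$; this is possible because $C,\lambda$ are uniform over $\Theta$ and $\bar h$ is uniform over $\mathcal{H}$. Similarly, $K$ must be the same in both turnpike bounds, which is why the hypothesis asks for a single $\epsilon$. The regret part itself is routine once one notices that no turnpike argument is needed there.
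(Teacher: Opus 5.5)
Your proposal is correct and follows the paper's proof. The first claim comes from Proposition~\ref{prop:TP} applied to $Y^*$ and, through Remark~\ref{rem:TP_coupled}, to $Y^\mathrm{c}$, combined with $\|a-b\|^2\le 2\|a-c\|^2+2\|c-b\|^2$ at $c=y^\infty_{j|i}$; the second claim comes from the zero-initialized candidate $\hat y_{j|i}=y_j(0,\theta^\mathrm{c},X^\mathrm{d}_i)$ and the argument of Lemma~\ref{lem:controllability}. Your coupling-feasible candidate (the $\theta^\infty$ trajectory run over the whole sequence, at the cost of a factor $2$ absorbed into $\bar C$) supplies the coupled version of Lemma~\ref{lem:controllability}, which the paper only asserts as a straightforward adaptation in Remark~\ref{rem:TP_coupled}.
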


\begin{proof}
	 The first statement is a consequence of Proposition~\ref{prop:TP}, Remark~\ref{rem:TP_coupled}, and application of the Cauchy-Schwartz and Young's inequality.
	 In particular, \eqref{eq:rem_TP_1} follows by noting that
	 \begin{align*}
	 	&\sum_{j=m+1}^{N} \ \frac{1}{S}\sum_{i=1}^{S}\|{y}_{j|i}^*-{y}_{j|i}^\mathrm{c}\|^2\\
	 	&\ \leq 2 \sum_{j=m+1}^{N} \ \frac{1}{S}\sum_{i=1}^{S}\|{y}_{j|i}^*-{y}_{j|i}^\infty\|^2 + 2\sum_{j=m+1}^{N}\ \frac{1}{S}\sum_{i=1}^{S}\|{y}_{j|i}^\mathrm{c}-{y}_{j|i}^\infty\|^2\\
	 	&\ \leq 4K\lambda^m.
	 \end{align*}
	 
	 The second statement can be established by applying similar steps as in the proof of Lemma~\ref{lem:controllability}. In particular, using the candidate solution $\hat{y}_{j|i}=y_j(0,\theta^\mathrm{c},X^\mathrm{d}_i)\in\mathcal{Y}$, $j\in\mathcal{N}$, $i\in\mathcal{S}$, we have that $J_S(Y^*;D) - J_S(Y^\mathrm{c};D) \leq J_S(\hat{Y};D) - J_S(Y^\mathrm{c};D)$ by optimality of $Y^*$.
	 Now, we can apply the same arguments that followed \eqref{eq:proof_contr_1}, which leads to \eqref{eq:rem_TP_2} and thus concludes this proof.
\end{proof}

\begin{rem}
	The estimate on the training regret in~\eqref{eq:rem_TP_2} in fact also holds true for $\epsilon\geq 1$, as it does not require the coercivity property established in Lemma~\ref{lem:coercivity}.
\end{rem}

Recalling the equivalence between the coupled solution $(H^\mathrm{c},Y^\mathrm{c},\theta^\mathrm{c})$ and the actual benchmark consisting of the hidden state sequence~$\{h^\mathrm{b}_t\}_{t=0}^T$ and the parameter $\theta^\mathrm{b}$ as specified in~\eqref{eq:equivalence}, one can directly see that Corollary~\ref{cor:TP} formalizes (and thus proves) Theorem~\ref{cor:TP_informal}.

\subsection{Accuracy and regret over the full training data}\label{app:regret_T}

In this section, we establish accuracy and regret estimates for the RNN parameterized by $\theta^*$ when processing the full training input sequence $X^\mathrm{d}$.
The estimates are given with respect to the accuracy achieved by the benchmark $(h^\mathrm{b}_0,\theta^\mathrm{b})$ (that is, the the solution to \eqref{eq:NLP_benchmark}), where we measure the performance using the truncated MSE loss as defined in~\eqref{eq:performance}.

The following result constitutes the formal version of Theorem~\ref{thm:P_informal}.

\begin{thm}\label{thm:P}
	Let Assumption~\ref{ass:output_stability} be satisfied.
	Suppose that there exists some $\epsilon>1$ small enough such that the property in~\eqref{eq:lem_opt} holds true for both $(H^*,Y^*,\theta^*)$ and $(H^\mathrm{c},Y^\mathrm{c},\theta^\mathrm{c})$, cf.~Remark~\ref{rem:TP_coupled}. Then, the following statements hold:
	\begin{enumerate}
		\item Averaged output accuracy: There exists a constant $E_1>0$ such that
		\begin{equation}\label{eq:thm_y}
			\frac{1}{T-m}\sum_{t=m+1}^{T}\|y_t(0,\theta^*,X^\mathrm{d})-y_t(h^\mathrm{b}_{0},\theta^\mathrm{b},X^\mathrm{d})\|^2 \leq  E_1 \cdot \frac{(S-1)\lambda^{2o_{\min}} + S\lambda^m}{T-m}
		\end{equation}
		\item Performance regret: There exists a constant $E_2>0$ such that
		\begin{equation}\label{eq:thm_regret}
			P(0,\theta^*;D) - P(h^\mathrm{b}_0,\theta^\mathrm{b};D) \leq  E_2\cdot \sqrt{\frac{(S-1)\lambda^{2o_{\min}} + S\lambda^m}{T-m}}.
		\end{equation}
	\end{enumerate}
\end{thm}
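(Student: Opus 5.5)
We prove the accuracy bound by covering the index range $\{m+1,\dots,T\}$ with the subsequence windows $\{s_i+m,\dots,s_i+N-1\}$, $i\in\mathcal{S}$. This is possible because $m\le o_{\min}$. Indeed, the window of subsequence $i$ restricted to $j\ge m+1$ starts at $s_i+m\le s_{i-1}+N$, so consecutive windows leave no gaps. The first window starts at $s_1+m=m+1$, since the subsequences cover $D$ and hence $s_1=1$. The last window ends at $T$.

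For each $t\in[m+1,T]$ we pick one index $i(t)$ and the corresponding $j=t-s_{i(t)}+1\ge m+1$. We then split the error with Young's inequality into three pieces:
\begin{align*}
\|y_t(0,\theta^*,X^\mathrm{d})-y_t(h^\mathrm{b}_0,\theta^\mathrm{b},X^\mathrm{d})\|^2 \le{}& 3\|y_t(0,\theta^*,X^\mathrm{d})-y^*_{j|i}\|^2+3\|y^*_{j|i}-y^\mathrm{c}_{j|i}\|^2\\
&+3\|y^\mathrm{c}_{j|i}-y_t(h^\mathrm{b}_0,\theta^\mathrm{b},X^\mathrm{d})\|^2.
\end{align*}
The third term vanishes by the equivalence \eqref{eq:equivalence}, because the coupled outputs are exactly the benchmark outputs along the global trajectory.

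The middle terms, summed over $t$, are bounded by $\sum_i\sum_{j\ge m+1}\|y^*_{j|i}-y^\mathrm{c}_{j|i}\|^2\le 4KS\lambda^m$, using Corollary~\ref{cor:TP}, item 1 (multiplied by $S$). This produces the $S\lambda^m$ contribution.

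The first term compares the full-sequence TBPTT output with the zero-initialized subsequence output. Both are generated by the same $\theta^*$ and the same inputs. The full-sequence run enters window $i$ with hidden state $h_{s_i-1}(0,\theta^*,X^\mathrm{d})$, whereas the subsequence run starts from $0$. Assumption~\ref{ass:output_stability}, applied to the input subsequence $X_i^\mathrm{d}$ (valid since $\theta^*\in\Theta$), gives the bound $C\bar h\lambda^j$, where $\bar h$ bounds hidden states on the compact set (cf.\ Remark~\ref{rem:set_Y}). The sharper $\lambda^{2o_{\min}}$ rate comes from choosing $i(t)$ well. We assign each $t$ in the overlap with the previous window to that previous window, except that the points $t\in\{s_i+m,\dots,s_{i-1}+N-1\}$ are the only ones possibly assigned at small $j$.

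The cleaner route is to use subsequence $i$ only from $j=o_i+1$ onward, i.e.\ for times after the previous window ends. There $j>o_{\min}$, so the squared error is at most $C^2\bar h^2\lambda^{2j}$. Summing the geometric series gives $\le c\lambda^{2o_{\min}}$ per subsequence $i\ge2$. For $i=1$ we have $s_1=1$ and can take $h_0=0$ for both runs, so that term is zero. This yields the $(S-1)\lambda^{2o_{\min}}$ term.

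The step I expect to be delicate is this bookkeeping. With the assignment just described, subsequence $i$ covers $j\in[o_i+1,N]$ and $i=1$ covers $j\in[m+1,N]$. The Corollary~\ref{cor:TP} bound for the middle term applies because all used $j$ satisfy $j\ge m+1$, which holds since $o_i\ge o_{\min}\ge m$. Collecting the terms and dividing by $T-m$ gives \eqref{eq:thm_y} with $E_1=\max\{3c,12K\}$.

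For the performance regret we write $a_t=y_t(0,\theta^*,X^\mathrm{d})-y^\mathrm{d}_t$ and $b_t=y_t(h_0^\mathrm{b},\theta^\mathrm{b},X^\mathrm{d})-y^\mathrm{d}_t$. Then
\[
\|a_t\|^2-\|b_t\|^2=(a_t-b_t)^\top(a_t+b_t)\le L_l\|a_t-b_t\|,
\]
with $L_l$ as in Lemma~\ref{lem:controllability}, using compactness of $\mathcal{Y}$ and $\mathcal{Y}^\mathrm{d}$ via Remark~\ref{rem:set_Y}. Averaging over $t\in[m+1,T]$ and applying Jensen/Cauchy--Schwarz, $\frac1{T-m}\sum\|a_t-b_t\|\le\big(\frac1{T-m}\sum\|a_t-b_t\|^2\big)^{1/2}$. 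Inserting \eqref{eq:thm_y} then gives \eqref{eq:thm_regret} with $E_2=L_l\sqrt{E_1}$.
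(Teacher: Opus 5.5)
Your proposal is correct and follows the paper's proof essentially step by step. It uses the same partition of $\{m+1,\dots,T\}$ into the first window ($j\ge m+1$) and the windows $i\ge2$ restricted to $j\ge o_i+1$ (relying on $m\le o_{\min}$), the same split into a stability term (Assumption~\ref{ass:output_stability} plus a geometric series) and a turnpike term (Corollary~\ref{cor:TP}), and the same Lipschitz-plus-Jensen step for the regret; your three-term Young split, whose third term vanishes by \eqref{eq:equivalence}, only changes the constant to $E_1=\max\{3c,12K\}$ instead of the paper's $2\max\{c_2,4K\}$.
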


\begin{proof}
	We start with some useful definitions and observations.
	Let $\{h^*_t\}_{t=0}^T$ be the hidden state sequence that is generated by the RNN from \eqref{eq:RNN_1}--\eqref{eq:RNN_2} parameterized by $\theta^*$ when being evaluated over the full training input sequence $X^\mathrm{d}$ and initialized with $h_0=0$. Define the corresponding output sequence as $y^*_t:=y_t(0,\theta^*,X^\mathrm{d})$, $t\in[1,T]$.
	Similarly, define the benchmark output sequence $y_t^\mathrm{b} = y_t(h_{0}^\mathrm{b},\theta^\mathrm{b},X^\mathrm{d})$, $t\in[1,T]$.
	Recall that $(h_0^\mathrm{b},\theta^\mathrm{b}) = (h_{0|1}^\mathrm{c},\theta^\mathrm{c})$  by~\eqref{eq:equivalence}, which yields
	\begin{equation}\label{eq:equivalence_Y}
		y^\mathrm{b}_{s_i+j-1}=y^\mathrm{c}_{j|i} \quad \forall j\in\mathcal{N},\quad \forall i\in\mathcal{S}.
	\end{equation}
	Without loss of generality, we can assume that $h^*_t\in\mathcal{H}$ and $y^*_t,y^\mathrm{b}_t\in\mathcal{Y}$ for all $t\in[1,T]$, compare Remark~\ref{rem:set_Y}.
	
	We proceed to prove the first claim of this theorem. To this end, note that	we can re-write the sum of output differences (i.e., the left-hand side of~\eqref{eq:thm_y}) in terms of the subsequences $i\in\mathcal{S}$ using the definition of the overlap $o_i$ from~\eqref{eq:overlap_i} according to
	\begin{equation}
		\sum_{t=m+1}^{T}\|y^*_t-y_t^\mathrm{b}\|^2
		= \sum_{j=m+1}^{N}\|y^*_j-y_j^\mathrm{b}\|^2
		+ \sum_{i=2}^{S}\sum_{j=o_i+1}^{N}\|y^*_{s_i+j-1}-y^\mathrm{b}_{s_i+j-1}\|^2.\label{eq:proof_thm_A1}
	\end{equation}
	Regarding the first sum on the right-hand side of~\eqref{eq:proof_thm_A1}, each $y^*_j$ coincides with the output of the first training subsequence in the sense that $y^*_j = y_j(0,\theta^*,X^\mathrm{d}) = y_j(0,\theta^*,X^\mathrm{d}_1) = y^*_{j|1}$ for all $j\in \mathcal{N}$.
	Using the Cauchy-Schwarz and Young's inequality together with~\eqref{eq:equivalence_Y}, it follows that
	\begin{align*}
		\|y^*_{s_i+j-1}-y^\mathrm{b}_{s_i+j-1}\|^2
		&= \|y_{s_i+j-1}^*-y_{j|i}^*+y_{j|i}^*-y^\mathrm{b}_{s_i+j-1}\|^2\\
		&\leq 2\|y_{j|i}^*-y_{j|i}^\mathrm{c}\|^2 + 2\|y_{s_i+j-1}^*-y_{j|i}^*\|^2
	\end{align*}
	for all $j\in\mathcal{N}$ and $i\in\mathcal{S}$.
	Consequently,~\eqref{eq:proof_thm_A1} can be bounded as
	\begin{align}
		\sum_{t=m+1}^{T}\|y^*_t-y_t^\mathrm{b}\|^2
		\leq&\ 
		\sum_{j=m+1}^{N}\|y^*_{j|1}-y_{j|1}^\mathrm{c}\|^2 +
		2\sum_{i=2}^{S}\sum_{j=o_i+1}^{N}\|y_{j|i}^*-y^\mathrm{c}_{j|i}\|^2\nonumber \\
		&\ +
		2\sum_{i=2}^{S}\sum_{j=o_i+1}^{N}\|y^*_{s_i+j-1}-y_{j|i}^*\|^2. \label{eq:proof_thm_A2}
	\end{align}
	We first focus on the last sum on the right-hand side in~\eqref{eq:proof_thm_A2}.
	Here, each summand corresponds to the difference of network outputs resulting from different initial hidden states, but using the same parameter $\theta^*$ which satisfies Assumption~\ref{ass:output_stability} by~\eqref{eq:NLP_theta}.
	In particular, for each $i\in\mathcal{S}$ with $i\geq2$, from~\eqref{eq:output_stability} we obtain
	\begin{align*}
		\|y^*_{s_i+j-1}-y_{j|i}^*\|^2 
		&= \|y_{s_i+j-1}(0,\theta^*,X^\mathrm{d})-y_j(0,\theta^*,X_i^\mathrm{d})\|^2\\
		&= \|y_{j}(h_{s_i-1}^*,\theta^*,X_i^\mathrm{d})-y_j(0,\theta^*,X_i^\mathrm{d})\|^2\\
		&\leq C^2\lambda^{2j}\|h_{s_i-1}^*\|^2 \leq c_1\lambda^{2j}
	\end{align*}	
	for all $j\in\mathcal{N}$, where we have used the definition of $c_1 := C^2\max_{h\in\mathcal{H}}\|h\|^2$ (noting that $c_1>0$ is well-defined due to compactness of $\mathcal{H}$ and $h^*_t\in\mathcal{H}$ for all $t\in[0,T]$, see above).
	By summing over $i,j$ and applying the geometric series, it follows that 
	\begin{align}
		\sum_{i=2}^{S}\sum_{j=o_i+1}^{N}\|y^*_{s_i+j-1}-y_{j|i}^*\|^2
		\leq \sum_{i=2}^{S}\sum_{j=o_i+1}^{N}c_1\lambda^{2j}
		&= \sum_{i=2}^{S}c_1
		\frac{\lambda^2}{1-\lambda^2}\lambda^{2o_i}\left(1-\lambda^{2(N-o_i)}\right)\nonumber \\
		&\leq c_2 \cdot (S-1)\lambda^{2o_{\min}},\label{eq:proof_thm_A3_1}
	\end{align}
	where $c_2 := {c_1\lambda^2}/({1-\lambda^2})$.
	
	The first line of the right-hand side in \eqref{eq:proof_thm_A2} can be bounded using the fact that the burn-in phase length~$m$ satisfies $m\leq o_{\min} \leq o_i$ for all $i\geq2$ together with the turnpike property provided by Corollary~\ref{cor:TP}, which leads to
	\begin{align}
		\sum_{j=m+1}^{N}\|y^*_{j|1}-y_{j|1}^\mathrm{c}\|^2 +
		2\sum_{i=2}^{S}\sum_{j=o_i+1}^{N}\|y_{j|i}^*-y^\mathrm{c}_{j|i}\|^2
		&\leq  2\sum_{i=1}^{S}\sum_{j=m+1}^{N}\|y_{j|i}^*-y^\mathrm{c}_{j|i}\|^2\nonumber\\
		&\leq 2\cdot 4KS\lambda^m.\label{eq:proof_thm_A4_1}
	\end{align}
	The combination of~\eqref{eq:proof_thm_A2}--\eqref{eq:proof_thm_A4_1} then yields
	\begin{equation*}
		\sum_{t=m+1}^{T}\|y^*_t-y_t^\mathrm{b}\|^2
		\leq  2c_2(S-1)\lambda^{2o_{\min}} + 8 SK\lambda^m.
	\end{equation*}
	Using the definition $E_1:=2\cdot\max\{c_2,4K\}$ and dividing both sides by $(T-m)>0$, we obtain~\eqref{eq:thm_y}.
	
	To prove the second statement, note that the performance criterion in~\eqref{eq:performance} is Lipschitz on compact sets (recall out observations in the beginning of this proof, the arguments that lead to~\eqref{eq:proof_contr_Lipschitz}, and the definition of the stage cost in~\eqref{eq:stage_cost}); consequently, the left-hand side of~\eqref{eq:thm_regret} can be bounded according to
	\begin{align}
		P(0,\theta^*;D) - P(h^\mathrm{b}_0,\theta^\mathrm{b};D)
		&= \|P(0,\theta^*;D) -  P(h^\mathrm{b}_0,\theta^\mathrm{b};D)\|\nonumber \\
		&= \frac{1}{T-m}\sum_{t=m+1}^{T}\left\|\|y^*_t-y^\mathrm{d}_t\|^2 - \|y^\mathrm{b}_t-y^\mathrm{d}_t\|^2\right\|\nonumber \\
		&\leq \frac{L_l}{T-m}\sum_{t=m+1}^{T}\|y^*_t-y^\mathrm{b}_t\|.
		\label{eq:proof_thm_B1}
	\end{align}
	Using concavity of the square root and Jensen's inequality, it holds that
	\begin{equation}
		\sum_{t=m+1}^{T}\|y^*_t-y^\mathrm{b}_t\|
		= \sum_{t=m+1}^{T}\sqrt{\|y^*_t-y^\mathrm{b}_t\|^2}
		\leq \frac{T-m}{\sqrt{T-m}}\left(\sum_{t=m+1}^{T}\|y^*_t-y^\mathrm{b}_t\|^2\right)^{1/2} \label{eq:proof_thm_B2}
	\end{equation}
	From~\eqref{eq:proof_thm_B1}, \eqref{eq:proof_thm_B2}, and~\eqref{eq:thm_y} proven above, we obtain
	\begin{align*}
		P(0,\theta^*;D) - P(h^\mathrm{b}_0,\theta^\mathrm{b};D)
		&\leq \frac{L_l}{T-m}\frac{T-m}{\sqrt{T-m}}\left(E_1 \cdot \left((S-1)\lambda^{2o_{\min}} + S\lambda^m\right)\right)^{1/2}\\
		&\leq L_l\sqrt{E_1}\sqrt{\frac{(S-1)\lambda^{2o_{\min}} + S\lambda^m}{T-m}}.
	\end{align*}
	This establishes~\eqref{eq:thm_regret} and hence finishes this proof.
\end{proof}

\section{Additional material for the experiments}\label{app:exp}

\subsection{Synthetic data}\label{app:exp_academic}

In this section, we provide additional illustrative figures and interpretations of the results of the synthetic experiment performed in Section~\ref{sec:exp_synthetic}.

Specifically, the left column in Figure \ref{fig:ex_academic_data} compares the generated outputs of the trained RNN, the benchmark RNN, and the ground truth data. Without using a burn-in phase (i.e., $m=0$, middle row), the transient resulting from the zero initialization causes the output $y^*_{t}=y_t(0,\theta^*,X^\mathrm{d})$ to be relatively far away from the benchmark output $y^\mathrm{b}_t= y_t(h_0^\mathrm{b},\theta^\mathrm{b},X^\mathrm{d})$ (which employs optimal initialization, compare Section~\ref{sec:optimization}) and the ground truth data.
This is because the optimizer tries to minimize the influence of this transient by minimizing the convergence rate, which comes at the expense of long-term accuracy.
In contrast, when using a burn-in phase $m\geq12$, this transient is effectively excluded from the cost function, letting the outputs $y^*_{j|i} = y_j(0,\theta^*,X^\mathrm{d}_i)$ converge closely to the benchmark (and the ground truth data).

According to the right column in Figure~\ref{fig:ex_academic_data}, these observations directly carry over to performance of the trained RNN on test data. 
Consequently, tuning the burn-in phase during training enables the RNN to achieve benchmark performance on both the training and the test data.

\subsection{Data sets from system identification and time series forecasting}\label{app:timeseries}

\begin{figure}
	\centering
	\includegraphics[]{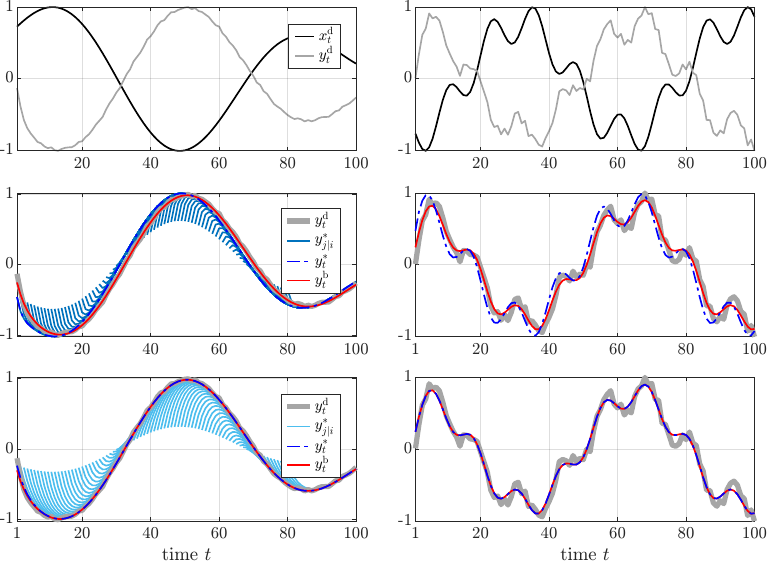}
	\caption{Supplementary results for the synthetic data experiment performed in Section~\ref{sec:exp_synthetic}. The left column refers to the training data, the right column to the test data. The corresponding input-output time series are shown in the top row. In the middle row, we compare the outputs of the RNN trained with $m=0$ (without burn-in) in terms of the outputs $\smash{y^*_{j|i}}$ (belonging to subsequences of the training data $\smash{X_i^\mathrm{d}}$, $i\in\mathcal{S}$) and the outputs ${y^*_{t}}$ (generated by processing the entire input sequence ${X^\mathrm{d}}$) with the benchmark output ${y^\mathrm{b}_t}$ and the ground truth data ${y^\mathrm{d}_t}$. The bottom row shows the respective results when training the RNNs using the burn-in phase length $m=N-1=20$.}
	\label{fig:ex_academic_data}
\end{figure}

This section contains supplementary material on the experiments conducted in Section~\ref{sec:exp_TS}. Parameters depending on the respective data sets are reported in Table~\ref{tab:res_sysid_app}.

For all experiments, we divide the training sequence of length $T$ into $S=T-N+1$ subsequences of length $N$, leading to the one-step overlap $o_i=1$ for all $i\in\mathcal{S}$ with $i\geq2$, compare Section~\ref{sec:data_segmentation}. This allows us leverage the maximum number of subsequences during TBPTT training and ultimately ensure a fair comparison between the considered methods and parameter choices. A thorough comparison of various options for the number of subsequences and how they might overlap is out of the scope of this paper, as this strongly depends on availability of additional prior knowledge and expertise about the data sets in order to decide which sequences actually contain information and are important for training, and which do not. However, this might indeed be relevant in certain scenarios (especially for very long training sequences), where a proper selection of the subsequences used for training (and hence the value of $S$) can potentially significantly accelerate the training procedure without sacrificing overall performance.

\begin{table}
	\caption{Training parameters for the experiments performed in Section~\ref{sec:exp_TS}.}
	\label{tab:res_sysid_app}
	\begin{center}
	\begin{threeparttable}
		\begingroup\footnotesize\sf
		\addtolength{\tabcolsep}{0.2em}
	
		\begin{tabular}{ccccccccc}
			\toprule
			data set
			& $T$ & $N$ &$n_y$ & $\alpha$  & $b$ & epochs & $T_{\mathrm{test}}$ & $T_{\mathrm{val}}$\\
			\midrule
			Silver-Box & 20,000 & 100..1000 & 1& 10\textsuperscript{-4} & 100 & 400  & 10,000 & 10,000 \\
			W-H        & 20,000 & 100..1000 & 1& 10\textsuperscript{-4} & 100 & 400  & 10,000 & 10,000 \\
			RLC        & 6667 & 100..1000 & 2& 3$\cdot$10\textsuperscript{-4} & 100 &  1000 & 3333 & 3333 \\
			\midrule
			Electricity (5) & 10,000 & 48, 96, 192 & 96& 5$\cdot$10\textsuperscript{-5} & 24 & 400  & 3360 & 3360\\
			Traffic (9)     & 10,000 & 48, 96, 192 & 96& 5$\cdot$10\textsuperscript{-5} & 24 & 400  & 3360 & 3360\\
			Solar (2)       & 20,000 & 48, 96, 192 & 96& 2$\cdot$10\textsuperscript{-4} & 24 & 300  & 5000 & 5000\\
			\bottomrule
		\end{tabular}
		\endgroup
		\begin{tablenotes}
			\item{\small The variables correspond to the training sequence length $T$, the TBPTT subsequence (lookback window) length $N$, the RNN output dimension $n_y$, the step size $\alpha$ used in the Adam algorithm, the batch size~$b$, the test sequence length $T_{\mathrm{test}}$, and the validation sequence length $T_{\mathrm{val}}$. The values in parentheses indicate the considered variate of the respective time series data (selected randomly).}
		\end{tablenotes}
	\end{threeparttable}
	\end{center}
\end{table}

We trained the RNNs in a standard Python environment using PyTorch and the Adam optimizer~\citep{Kingma2015}. Computations are performed on a standard laptop (Intel Core Ultra 7 265H, 32 GB RAM, NVIDIA RTX PRO 500 Blackwell GPU).

For the training of RNNs for time series forecasting applied to the Electricity, Traffic, and Solar-Energy data sets, we first split the available time series into a training segment of length $T$ and a test segment of length $T_\mathrm{test}$. Then, we pre-train the RNNs on a subsequence of the training data of length $T-T_\mathrm{val}$ using different values of $N$ and $m$. The remaining subsequence of length $T_\mathrm{val}$ is used for evaluating the pre-training performance and ultimately selecting $m^*$ based on the MSE achieved during pre-training and validation.
Then, for every $N$ and $m$, we re-train the RNN using the full training sequence of length $T$, and evaluate the forecast performance (and hence our choice of $m^*$) on the test segment of length $T_\mathrm{test}$.
Stateful TBPTT and BPTT training is performed directly using the full training sequence of length $T$.

In Figure~\ref{fig:ex_app_TS_m}, we compare the MSE achieved by each RNN after TBPTT training for various combinations of the window length $N$ and burn in phase $m$. The top row corresponds to system identification datasets, and the bottom row to time series forecasting data sets.
For the latter, we show the training and test MSE achieved after re-training the RNNs on the full training sequence of length $T$ using the previously selected values of $N$ and $m$.

In all experiments, we find that the performance achieved on both the training and test data is influenced by the choices of $N$ and $m$, as suggested by our theoretical results. Moreover, proper tuning of $N$ leads to a reduction of the MSE achieved on the training and test data of more than 60\% in some cases, see Table~\ref{tab:res_sysid_2}.

\begin{figure}
	\centering
	\includegraphics[]{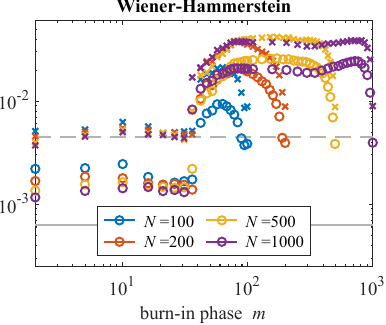}\hfill
	\includegraphics[]{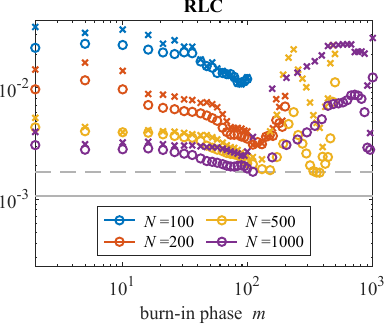}\\[5ex]
	\includegraphics[]{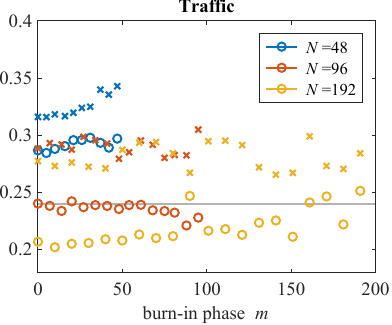}\hfill
	\includegraphics[]{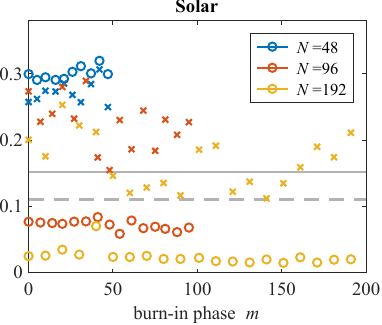}
	\caption{MSE after (re-)training the respective RNNs using TBPTT and various combinations of the window length $N$ and burn-in phase $m$; `$\circ$'-markers correspond to the MSE achieved on the training data, `$\times$'-markers to the MSE on the test data. Gray lines refer to the BPTT-trained RNN evaluated on the training (solid) and test data (dashed). This figure complements Figure~\ref{fig:ex_training}.}
	\label{fig:ex_app_TS_m}
\end{figure}

\end{document}